\theoremstyle{plain}
\newtheorem{theorem}{Theorem}[section]
\newtheorem{lemma}[theorem]{Lemma}
\theoremstyle{definition}
\theoremstyle{remark}
\newtheorem{remark}[theorem]{Remark}
\crefname{theorem}{Theorem}{Theorems}
\Crefname{theorem}{Theorem}{Theorems}
\crefname{lemma}{Lemma}{Lemmas}
\Crefname{lemma}{Lemma}{Lemmas}
\crefname{proposition}{Proposition}{Propositions}
\Crefname{proposition}{Proposition}{Propositions}
\crefname{corollary}{Corollary}{Corollaries}
\Crefname{corollary}{Corollary}{Corollaries}
\newcommand{\ospoProp}{\textsc{OSPO-Prop}}
\newcommand{\ospoRank}{\textsc{OSPO-Rank}}
\newcommand{\ospoClip}{\textsc{OSPO-Clip}}
\newcommand{\ospo}{\textsc{OSPO}}
\newcommand{\nk}[1]{\textcolor{black}{#1}}
\title{Owen-Shapley Policy Optimization: A Principled RL Algorithm for Generative Search LLMs}
\author{%
\textbf{Abhijnan Nath}$^{1,2}$\begingroup
\thanks{$^{1}$Amazon Science, Seattle, WA, USA \quad
$^{2}$Situated Grounding and Natural Language (SIGNAL) Lab, Colorado State University. Work done during an internship at Amazon Science.}
\endgroup
\quad
\textbf{Alireza Bagheri Garakani}$^{1}$ \quad
\textbf{Tianchen Zhou}$^{1}$ \\
\textbf{Fan Yang}$^{1}$ \quad
\textbf{Yan Gao}$^{1}$ \quad
\textbf{Nikhil Krishnaswamy}$^{2}$ \\
\texttt{\{abhijnan.nath, nkrishna\}@colostate.edu \quad alirezg@amazon.com}
}
\begin{document}

\maketitle

\begin{abstract}
Large language models are increasingly trained via reinforcement learning for personalized recommendation tasks, but standard methods like GRPO rely on sparse, sequence-level rewards. \nk{These obscure which tokens actually contribute to high-quality outputs, creating a \textit{credit assignment gap}}. This gap is especially problematic when models must infer latent user intent from under-specified language without ground truth labels, which is a reasoning pattern rarely seen during pretraining \nk{but commonly required in deployment}. We introduce \textsc{Owen-Shapley Policy Optimization} (\textsc{OSPO}), a framework that redistributes sequence-level advantages based on tokens' \textit{marginal} contributions to outcomes. \textsc{OSPO} \nk{transforms task feedback into} potential-based reward shaping via Shapley-Owen attributions to assign segment-level credit while preserving the optimal policy—all without parametric value models. By forming coalitions of semantically coherent units (e.g., phrases describing product attributes or sentences capturing preferences), \textsc{OSPO} identifies which response parts drive performance. Experiments on Amazon ESCI and H\&M Fashion datasets including controlled generation tasks show consistent gains over baselines and notable test-time robustness to out-of-distribution retrievers unseen during training. Project Code: \url{https://github.com/amazon-science/owen-shapley-policy-optimization}
\end{abstract}

\vspace{-2mm}
\section{Introduction}
\label{sec:intro}
\vspace{-2mm}
 Large language models (LLMs) are increasingly central to modern recommendation pipelines. They can synthesize nuanced user preference summaries, reformulate complex queries, and generate natural-language artifacts that condition downstream retrieval and ranking processes~\citep{peng2025surveyllmpoweredagentsrecommender, lin2025rec}. \nk{This capability is increasingly useful for relevant, personalized recommendations incorporating user history and implicit preferences.} A major factor driving this adoption is their ability to learn directly from explicit task feedback through Reinforcement Learning (RL), bypassing costly human annotations. This makes them especially effective in dynamic, personalized settings where fine-grained supervision is infeasible~\citep{lin2025rec,lightman2023let,cheng2025stopsummationminformcredit}. Efficient algorithms such as Group-Relative Policy Optimization (GRPO;~\citet{shao2024deepseekmath}) enable LLMs to optimize against verifiable sequence-level signals from black-box components such as retrievers, rankers, or quality assessors, without requiring an external value model.

   \begin{figure*}[t]
  \centering
  \includegraphics[width=\textwidth]{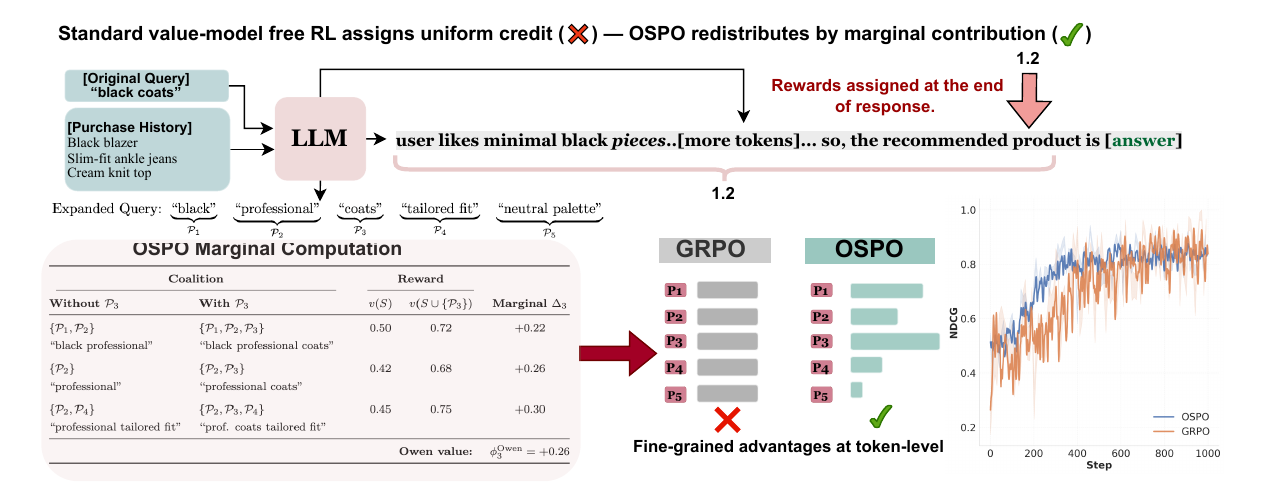}
  \caption{\textbf{OSPO overview: fine-grained credit assignment via Owen-Shapley values.} Standard value-model-free RL (e.g., GRPO;~\citet{shao2024deepseekmath}) assigns uniform advantages to all tokens {\it via a single terminal reward} (bottom center right, grey bars), ignoring segment-level contributions. OSPO evaluates contiguous coalitions by querying a retriever or reward model with partial sequences, computing each segment's marginal contribution to retrieval quality. ``\ospo{} Marginal Contribution'' shows coalition sampling for segment $\mathcal{P}_3$ (``coats''): adding $\mathcal{P}_3$ to coalition $\{\mathcal{P}_1, \mathcal{P}_2\}$ increases the reward from 0.50 to 0.72, yielding marginal $\Delta_3 = +0.22$; averaging such marginals across all coalitions produces the Owen value $\phi_3^{\text{Owen}} = +0.26$, indicating that ``coats'' consistently improves retrieval by providing explicit category information. These Owen values are then used to redistribute advantages to tokens proportionally to their impact (bottom center right, teal bars), focusing gradient updates on truly high-contribution segments. Training dynamics (bottom right) show \ospo{} reaches target performance in $\sim$50\% fewer steps than GRPO, demonstrating improved sample efficiency.}
  \vspace{-5mm}
  \label{fig:ospo_overview}
\end{figure*}

However, removing the value model introduces structural limitations in how feedback is represented and propagated during policy training. In single-turn\footnote{Single-turn contrasts with multi-turn search, where the LLM incrementally refines its reasoning through interactive context acquisition~\citep{yao2022react,xi2025surveyllmbaseddeepsearch}.} generative recommendation systems, two major issues emerge. First, without a value model to decompose feedback across the sequence, GRPO-style training assigns a single scalar advantage to the \textit{entire} response, obscuring which phrases or reasoning segments actually drive downstream utility~\citep{yue2025vapo}, and limiting interpretability and efficiency~\citep{henderson2018deep, shao2024deepseekmath, cao2025scar, cheng2025stopsummationminformcredit}. Second, the absence of a structured value function increases vulnerability to reward hacking, where the model overfits to surface-level cues in black-box retrievers or rankers that expose only scalar feedback rather than the internal semantics of task success~\citep{stiennon2020learning, skalse2025definingcharacterizingrewardhacking, lin2025rec}. The uniform advantages of GRPO training cannot distinguish useful segments from exploitative ones, allowing shortcuts to dominate gradients. Given the rapid adoption of value-model-free methods~\citep{zhang2025towards}, 
resolving these limitations \textit{within} the critic-free regime is essential—especially where ground-truth preferences are latent and 
feedback comes exclusively from black-box retrievers.

To address these limitations, we introduce \textbf{Owen–Shapley Policy Optimization} (\textsc{OSPO}), an attribution-based RL framework that redefines how gradients are distributed across generated tokens. Unlike standard policy gradients that treat all tokens uniformly~\citep{williams92, ahmadian-etal-2024-back, shao2024deepseekmath}, \textsc{OSPO} asks: \emph{which parts of the sequence truly earned the reward?} Leveraging Owen–Shapley values from cooperative game theory~\citep{shapley1953stochastic, owen1977values, winter1992consistency, ruess2024fairnessanalysisshapleyoweneffects}, \textsc{OSPO} models semantically coherent token segments as players in a coalition game, estimating their true contribution to the observed reward.

For example, in product search, customer queries often contain attribute-specific preference signals embedded in phrases, such as “looking for [product] in \textbf{my usual neutral tones}”. \ospo{} treats these segments as high-level abstractions in the action space, akin to options~\citep{sutton1984temporal} or macro-actions~\citep{chai2025marlhfreinforcementlearninghuman} in hierarchical RL. To assign such values, each segment's marginal contribution to the total reward is measured by how much it increases the reward when added to different coalitions. Segments that consistently improve outcomes receive credit, while those that dilute or distract are down-weighted. These attributions are obtained by querying a feedback model that returns scalar scores (e.g., a retriever or reward model), requiring no external value networks or process reward models~\citep{schulman2017proximalpolicyoptimizationalgorithms,lightman2023let}.

Our key contribution is proposing a policy learning approach that can integrate such fine-grained value attributions from Owen values in learning good policies, while additionally leveraging Monte Carlo-based sampling rollouts that allow \ospo{} to achieve a reasonable tradeoff between exploration and exploitation~\citep{sutton2018reinforcement}. \textbf{Our core insight in achieving this balance is one of \emph{gradient budget reallocation via advantage redistribution}}: rather than spreading policy gradients uniformly across all tokens in a sequence~\citep{shao2024deepseekmath}, \textsc{OSPO} concentrates updates on high-attribution segments while reducing signal on low-value content. Fig.~\ref{fig:ospo_overview} illustrates \ospo{}'s advantage redistribution mechanism. \ospo{} computes fine-grained advantages for response segments directly from task-based feedback or a reward model—weighted by the Owen attributions. Specifically, \textsc{OSPO} applies a simple mapping from segment-level Owen values to tokens that respects the scale of standard Monte Carlo group-based advantages~\citep{williams92}, while reducing computation complexity involved in exact Shapley computations. This helps resolve the credit assignment gap inherent in learning \textit{solely} from terminal rewards \textit{without compromising learning of final outcomes.} To stabilize training dynamics, \textsc{OSPO} admits potential-based reward-shaping (PBRS;~\citet{ng1999policy}) to adapt segment-level Owen values to tokens by adopting the standard clip-surrogate objective for stable policy updates~\citep{schulman2017proximalpolicyoptimizationalgorithms}. 
As Fig.~\ref{fig:ospo_overview} demonstrates, this focused attribution yields substantially faster learning similar to prior work~ \citep{koo2025learning}: \textsc{OSPO} reaches 85\% Normalized Discounted Cumulative Gain (NDCG) in approximately 400 training steps on the fashion domain search task, while GRPO requires over 800 steps to approach similar performance despite identical training data and settings.

\vspace{-2mm}
\section{Related Work}
\vspace{-2mm}

\paragraph{Generative Recommendation Systems}
LLMs have transformed recommendation systems through reasoning-driven decision making~\citep{wang2020kerl, zhao2018recommendations, liu2023exploration}. Recent methods integrate LLMs via feature enrichment~\citep{xi2024towards, li2023ecomgptinstructiontuninglargelanguage}, ranking~\citep{geng2023recommendationlanguageprocessingrlp, luo2024recrankerinstructiontuninglarge}, and conversational interaction~\citep{zhou-etal-2020-towards, gao2023chatrecinteractiveexplainablellmsaugmented, bang2025llm}. RL-LLM hybrids~\citep{jeong2023factual, sun2024rlrf4rec, lu2024aligning, rafailov2024direct} enable personalization but focus on offline tuning. Building on Rec-R1~\citep{lin2025rec}, which uses online task-specific feedback, \textsc{OSPO} addresses sparse credit assignment via Owen-Shapley attributions over segment-level spans. 

 \vspace{-3mm}
\paragraph{Owen–Shapley Values} Owen–Shapley values have been explored across domains~\citep{owen2014sobol, gimenez2019owen, lundberg2017unified} and applied in NLP for LLM interpretation~\citep{goldshmidt2024tokenshap, mohammadi2024explaining}, prompt valuation~\citep{liu2023prompt}, model compression~\citep{sun2025efficient}, and dataset refinement~\citep{he2024shed}. However, their use in RLHF remains limited. Recent work like SCAR~\citep{cao2025scar} and \citet{koo2025learning} extend Shapley-based reward distribution for LLM training.

 \vspace{-3mm}
\paragraph{Densifying Sparse Rewards}
Dense feedback in policy optimization traditionally relies on value function-based advantage estimation~\citep{sutton1999between, schulman2015high, schulman2017proximalpolicyoptimizationalgorithms}, which is computationally heavy and requires learned critics. Value model-free alternatives like DPO and GRPO~\citep{rafailov2024direct, shao2024deepseekmath, liu2025understandingr1zeroliketrainingcritical} reduce overhead, but assign uniform credit to tokens~\citep{parthasarathi2025grpolambdacreditassignmentimproves}, often causing to overfitting~\citep{yu2025dapo, guo2025segment}. Process Reward Models~\citep{lightman2023let, wu2023fine, bai2022training, choudhury2025processrewardmodelsllm} densify rewards via step-level supervision but require labeled intermediate steps and remain confined to deterministic domains~\citep{fang2025mathodyssey, sun2025challengingboundariesreasoningolympiadlevel}. While prior work addresses uniform credit assignment through temporal propagation via mid-sequence rollouts~\citep{kazemnejad2024vineppo, guo2025segment} or eligibility traces on the importance ratio~\citep{parthasarathi2025grpolambdacreditassignmentimproves}—leaving the advantage function uniform—and~\citet{samineni2025rlonlyanalyzingstructural} show that uniform advantage assignment reduces GRPO to filtered imitation learning, \textsc{OSPO} directly redistributes the advantage itself proportional to each segment's marginal retrieval contribution without mid-sequence rollouts or a value model. Attention-based methods~\citep{chan2024dense,ng1999policy} propagate reward through attention weights requiring tokenizer alignment but can entangle spurious correlations~\citep{cao2025scar,xie2025capoenhancingllmreasoning}.  Recent Shapley-based approaches such as SCAR~\citep{cao2025scar} compute principled marginal contributions but operate within PPO's actor-critic framework, requiring a learned value model and a reward model capable of scoring partial completions—a condition standard preference RMs~\citep{lambert2024rewardbench} do not satisfy. \ospo{} instead targets the critic-free regime with retrieval-based rewards, where dense similarity models naturally admit partial phrase coalitions as valid inputs~\citep{reddy2022shopping}, operating directly over GRPO-style group advantages without an auxiliary value network.

\vspace{-2mm}
\section{Problem Formulation and Background}
\label{sec:problem}
\vspace{-2mm}

We model text generation for recommendation as a Markov Decision Process (MDP) defined by the tuple $(\mathcal{S}, \mathcal{A}, P, r, \rho_0, \gamma)$. \textbf{State space} $\mathcal{S}$ consists of all partial sequences $(x, y_{<t})$ combining the input context $x$ (e.g., user purchase history or raw search query) and tokens generated so far $y_{<t} = (y_1, \ldots, y_{t-1})$. \textbf{Action space} $\mathcal{A}$ is the vocabulary from which the policy selects the next token $y_t$. \textbf{Transition function} $P(s_{t+1} | s_t, a_t)$ is deterministic—appending token $a_t = y_t$ to the history to form state $s_{t+1} = (x, y_{\leq t})$. \textbf{Reward function} $r: \mathcal{S} \times \mathcal{A} \to \mathbb{R}$ assigns scalar feedback. \textbf{Initial state distribution} $\rho_0$ samples input $x \sim \mathcal{D}$ from the data distribution. \textbf{Discount factor} $\gamma = 1$ for the undiscounted episodic setting as is standard for LLMs~\citep{rafailov2024r}. A trajectory $\tau = (s_0, a_0, s_1, a_1, \ldots, s_T, a_T)$ corresponds to autoregressively generating a complete sequence $y = (y_1, \ldots, y_T)$ via the policy $\pi_\theta(y_t | s_t) = \pi_\theta(y_t | x, y_{<t})$, where $\pi_\theta(y | x) = \prod_{t=1}^T \pi_\theta(y_t | x, y_{<t})$. 

The learning objective\footnote{We set the KL coefficient to $\beta=0$, following prior work~\citep{lin2025rec}.} is to find a policy that maximizes expected downstream performance:
\begin{equation}
\label{eq:objective}
\max_{\theta}\; \mathcal{J}(\theta) = \mathbb{E}_{x \sim \mathcal{D},~ y \sim \pi_{\theta}(\cdot | x)}[\, r(x,y) \,],
\end{equation}
where $r(x, y) \in \mathbb{R}$ is a terminal reward from the downstream system—for instance, NDCG in product search or a reward model score in user profile generation~\citep{zheng2023judgingllmasajudgemtbenchchatbot}. 

Because $r(x,y)$ is \textbf{sequence-level and terminal}, with $r_t = 0$ for $t<T$ and $r_T = r(x,y)$, the policy receives no signal about which tokens—or, more meaningfully, which \emph{semantic segments} of the response—contributed to success, creating the credit assignment problem. To address sparse feedback, methods like GRPO~\citep{shao2024deepseekmath, lin2025rec} replace the learned value function with a sampling-based baseline. GRPO samples $G$ responses per prompt and computes group-relative advantages by normalizing rewards within the batch:
\vspace{-4mm}
\begin{equation}
\label{eq:grpo_advantage}
\hat{A}^{(g)} = \frac{r^{(g)} - \bar{r}}{\sigma_r}, 
\quad \text{where} \quad 
\bar{r} = \tfrac{1}{G}\sum_{g'=1}^G r^{(g')}, \quad \sigma_r = \sqrt{\tfrac{1}{G}\sum_{g'=1}^G (r^{(g')} - \bar{r})^2}.
\end{equation}
\vspace{-4mm}
The GRPO objective is then:

\begin{equation}
\label{eq:grpo_obj}
\mathcal{J}^{\text{GRPO}}(\theta)
= \frac{1}{G}\sum_{g=1}^G \frac{1}{T}\sum_{t=1}^{T}
\min\!\Big(
\rho_t^{(g)} \hat{A}^{(g)},
\operatorname{clip}(\rho_t^{(g)},\,1-\epsilon,\,1+\epsilon)\hat{A}^{(g)}
\Big),
\end{equation}

where $\rho_t^{(g)}=\frac{\pi_\theta(y_t^{(g)} \mid x, y_{<t}^{(g)})}{\pi_{\theta_{\text{old}}}(y_t^{(g)} \mid x, y_{<t}^{(g)})}$ is the importance sampling ratio. 
While GRPO reduces variance via group normalization, it applies a \emph{uniform advantage} $\hat{A}^{(g)}$ to all tokens in a sequence~\citep{samineni2025rlonlyanalyzingstructural}, obscuring which tokens or semantic segments drove the final reward $r^{(g)}$.  


\vspace{-2mm}
\subsection{Owen-Shapley Policy Optimization: Segment-Level Credit Assignment}
\label{ssec:owen-shap-values}
\vspace{-2mm}

The mismatch between GRPO’s token-level uniform advantages and the compositional structure of recommendation tasks motivates a shift in granularity. To resolve this, our approach, \textsc{OSPO}, leverages the structure of Owen “coalitions”~\citep{owen1977values}—groups of segments (or, in game-theoretic terms, players)—that compute what value \textit{should} be assigned to each segment based on its \textit{expected marginal contribution} to the total reward. Intuitively, this allows the policy $\pi_\theta$ to learn which among the segments it generated actually led to the observed feedback. In other words, the policy learns counterfactually—reasoning over which parts of its response were relatively “more valuable” than others.\footnote{“Counterfactual” here refers to retrieval marginal contribution, not what the policy would have generated under different conditions~\citep{wangroad}.}

Specifically, let a response $y^{(g)}$ generated from the policy $\pi_\theta$ during group-based sampling be decomposed into $N$ semantically meaningful segments $\mathcal{A}^{(g)} = \{a_1, \dots, a_N\}$, where each $a_j$ denotes a textual span corresponding to a linguistically coherent unit—such spans may represent noun phrases, adjective-noun pairs, verb phrases, or even full sentences depending on the task (see Appendix~\ref{ssec:coalition_ablations} for extraction details). For any coalition $S \subseteq \{1, \ldots, N\}$, we define the \emph{characteristic function} $v(S)$ as the reward obtained when only the segments indexed by $S$ are present. To implement this, we construct a partial query $y_S$ by concatenating the segments in coalition $S$ while keeping the input context $x$ fixed: $y_S = a_i \oplus a_{i+1} \oplus \cdots$ for $i \in S$, where $\oplus$ denotes string concatenation.

Classical Shapley values~\citep{shapley1953stochastic} assign to each segment $j \in \{1, \ldots, N\}$ its average marginal contribution across all possible subsets:
\vspace{-2mm}
\begin{equation}
\small
\label{eq:shapley}
\phi_j
~=~
\sum_{S \subseteq \{1,\ldots,N\} \setminus \{j\}} 
\frac{|S|!\,(N-|S|-1)!}{N!}\,\Big( v(S \cup \{j\}) - v(S) \Big).
\end{equation}
This formulation ensures key fairness properties—\emph{efficiency} ($\sum_j \phi_j = v(\{1,\ldots,N\})$), \emph{symmetry}, and \emph{linearity}—but quickly becomes intractable as $N$ grows, requiring evaluation of $2^N$ coalitions. The Owen value~\citep{owen1977values} extends Shapley's framework by restricting the coalitions to a structured subset while preserving the core fairness guarantees~\citep{owen2014sobol}. In our case, we impose a \textbf{contiguity constraint}: we only evaluate coalitions $S = \{i, i+1, \ldots, i+w-1\}$ consisting of consecutive segments, where $w \leq w_{\max}$ is a maximum coalition width hyperparameter. This restriction serves two purposes: (1) it reduces computational complexity from $O(2^N)$ to $O(N \cdot w_{\max})$—\textbf{linear} in the number of segments for fixed $w_{\max}$ compared to the quadratic complexity of prior Shapley approximations~\citep{cao2025scar}—making online evaluation tractable during RL training, and (2) it preserves semantic coherence, as non-contiguous coalitions (e.g., skipping middle segments) would produce fragmented queries that may confound reward evaluation. We denote the resulting segment attributions as $\phi_j^{\text{Owen}}$, computed by averaging marginal contributions $v(S \cup \{j\}) - v(S)$ only over contiguous coalitions $S$ where $j \notin S$ but $S \cup \{j\}$ remains contiguous:
\begin{equation}
\small
\label{eq:owen_contiguous}
\phi_j^{\text{Owen}}
~=~
\frac{1}{|C_j|}\sum_{S \in C_j} \Big( v(S \cup \{j\}) - v(S) \Big),
\end{equation}
where $C_j = \{S : S \text{ contiguous}, j \notin S, S \cup \{j\} \text{ contiguous}\}$ is the set of applicable coalitions for segment $j$. Full coalition generation details and empirical verification of linear scaling are in Appendix~\ref{ssec:coalition_formation} (Fig.~\ref{fig:complexity}).

This coalitional view naturally aligns with the linguistic structure of LLM outputs in recommendation tasks, where contiguous spans, such as attribute–intent phrases (``blue midi dress'') or preference clauses (``prefers seasonal styles''), form coherent compositional units. By assigning credit to these coalitions, \textsc{OSPO} captures\footnote{Empirically, we find that Owen values correlate $3.3\times$ more strongly with 
ground-truth solo-phrase retrieval quality than attention-based 
proxies ($\rho{=}0.289$ vs.\ $0.078$) and outperform TF-IDF 
attribution ($\rho{=}0.079$); see 
Appendix~\ref{ssec:compute_ospo_appendix}.}  how phrase combinations jointly influence downstream reward without enumerating all $2^N$ subsets. Conceptually, this mirrors \emph{options} or \emph{temporal abstractions} in reinforcement learning~\citep{sutton2018reinforcement,konidaris2009skill,chai2025marlhfreinforcementlearninghuman}, allowing more efficient exploration and faster credit propagation across a response.

\vspace{-4mm}
 

\paragraph{From Segment Attributions to Token Advantages} 
While Owen values $\phi_j^{\text{Owen}}$ quantify segment-level contributions, we must bridge two granularity gaps to effectively use such fine-grained values in policy optimization. First, standard baseline-based advantage computation requires exploring multiple completions per prompt via Monte Carlo sampling~\citep{williams92, kool2019buy} to estimate which responses are better than average. This yields per-sequence advantages $\hat{A}^{(g)} = R^{(g)} - \bar{R}$ that indicate whether completion $g$ should be reinforced or suppressed. Second, clipped surrogate objectives~\citep{schulman2017proximalpolicyoptimizationalgorithms, shao2024deepseekmath, yue2025vapo} operate at the “token” level, requiring decomposition of sequence-level feedback into token-level signals. Owen values address \textit{within-sequence} attribution by identifying which segments contributed to quality, but the policy must also learn from these \textit{between-sequence} group-level advantages to effectively explore and distinguish better completions from worse ones. 

To achieve both objectives, our core insight is \textbf{advantage redistribution}: instead of uniform assignment, we redistribute the per-sequence advantages $\hat{A}^{(g)}$ across tokens, weighted by their Owen values as computed above. This strategy allows us to leverage group-based advantages for variance reduction~\citep{williams92, kool2019buy} during exploration while learning fine-grained token attributions, and overcoming the credit assignment problem \textit{without relying on an external value model}. We construct a mapping $W \in \mathbb{R}^{N \times T}$ where $W_{it}$ encodes  
fraction of token span covered by the segment $a_i$ and token $y_t$, projecting segment Owen values to token space: $\boldsymbol{\phi}_{\text{tok}} = W^\top \boldsymbol{\phi}_{\text{seg}}$. After normalizing to unit sum $\tilde{\phi}_t = \phi_t / \sum_{t'} \phi_{t'}$, a naive proportional redistribution $\tilde{A}_t = \hat{A}^{(g)} \cdot \tilde{\phi}_t$ would create severe length bias~\citep{liu2025understandingr1zeroliketrainingcritical, yue2025vapo} where shorter responses are rewarded more while longer incorrect responses go under-penalized. Since Owen values must sum to the total reward (due to the efficiency property), longer sequences experience systematic dilution where each token receives exponentially weaker gradient signals compared to shorter responses. To eliminate this bias, we multiply by sequence length:
\vspace{-1mm}
\begin{equation}
\label{eq:ospo_advantage}
A_t^{(g)} = T \cdot \tilde{\phi}_t^{(g)} \cdot \hat{A}^{(g)}.
\end{equation}

This ensures the average token advantage $\frac{1}{T}\sum_t A_t^{(g)} = \hat{A}^{(g)}$ equals the original sequence-level advantage regardless of length (proof in \Cref{lemma:length_invariant}, Appendix~\ref{app:proofs}), eliminating the length-dependent training bias identified by~\citet{liu2025understandingr1zeroliketrainingcritical}.


Substituting these \textbf{token-specific} advantages $A_t^{(g)}$ into the GRPO objective (Eq.~\ref{eq:grpo_obj})—which uses uniform sequence-level advantages $\hat{A}^{(g)}$—yields the \textbf{\textsc{OSPO} objective}:
\begin{equation}
\label{eq:ospo_obj}
\mathcal{J}^{\text{OSPO}}(\theta)
= \frac{1}{G}\sum_{g=1}^G \frac{1}{T}\sum_{t=1}^{T}
\min\!\Big(
\rho_t^{(g)} A_t^{(g)},
\operatorname{clip}(\rho_t^{(g)},\,1-\epsilon,\,1+\epsilon) A_t^{(g)}
\Big),
\end{equation}
where $A_t^{(g)} = T \cdot \tilde{\phi}_t^{(g)} \cdot \hat{A}^{(g)}$ 
are Owen-weighted token advantages (cf.\ Eq.~\ref{eq:ospo_advantage}). 
\textbf{The key distinction from GRPO}: while GRPO assigns the same 
advantage $\hat{A}^{(g)}$ to all tokens in sequence $g$, \ospo{} 
redistributes this advantage proportionally to each token's causal 
contribution $\phi_t^{(g)}$ as measured by Owen values—we call this \textbf{\ospoProp{}}. We additionally evaluate \textbf{\ospoRank{}}, 
which maps Owen values to ordinal ranks before redistribution 
($A_t^{(g)} = T \cdot \hat{A}^{(g)} \cdot \text{rank}_t^{(g)} / 
\sum_{t'} \text{rank}_{t'}^{(g)}$) for robustness under noisy or 
skewed attributions from empirical Owen-value estimates, and \textbf{\ospoClip{}}, which constrains redistributed advantages to $A_t \in [-2\sigma_{\hat{A}}, +2\sigma_{\hat{A}}]$ based on Chebyshev's inequality, where $\sigma_{\hat{A}} = \text{std}(\hat{A}^{(g)})$ is the GRPO baseline standard deviation; both preserving length-invariance 
(Eq.~\ref{eq:ospo_advantage}, Appendix~\ref{app:proofs}). 
Algorithm~\ref{alg:ospo} provides the complete \textsc{OSPO} 
training procedure.

\vspace{-2mm}
\section{Experimental Setup}
\label{sec:experimental_setup}
\vspace{-2mm}
Our experimental setup is designed to answer \textbf{three} research questions: 
\textbf{[RQ1]} Can segment-level marginal credit assignment—\textit{without} a value model—improve policy learning in outcome-based retrieval settings where terminal rewards are compositional over semantically coherent response segments?
\textbf{[RQ2]} Does OSPO generalize to a structurally different generation 
task where the attribution target is an intermediate reasoning trace and 
the reward derives from a parametric model rather than a black-box retriever?
\textbf{[RQ3]} Does coalition-based credit assignment learn retriever-agnostic 
query representations that transfer under distribution shift to unseen 
retrievers at test time?
 
\vspace{-2mm}
\paragraph{Tasks and Datasets} We evaluate \ospo{} on two e-commerce tasks testing complementary aspects of advantage redistribution. \textbf{Product search query refinement} addresses \textbf{RQ1} and tests whether models can identify which tokens in expanded queries drive retrieval success. Customers provide under-specified queries that LLMs expand with attributes, but standard RL methods assign uniform credit across all tokens despite unequal contributions to retrieval performance. \nk{Successful \ospo{} should promote those coalitions truly key to successful retrieval.} \textbf{User profile summarization} addresses \textbf{RQ2} and evaluates whether advantage redistribution improves intermediate reasoning quality. Models generate purchase history summaries before predicting next items. Since true preferences are latent, summary quality directly determines downstream accuracy. A key design choice: Owen values target the refined query for search tasks but the reasoning trace for summarization, assessing OSPO's impact across different output types. We use \textbf{ESCI shopping queries}~\citep{reddy2022shopping} for conventional search and \textbf{H\&M Fashion Recommendations}~\citep{h-and-m-personalized-fashion-recommendations} for contextualized search and user summarization. Similar to prior work~\citep{lin2025rec}, all search tasks use dense retrieval via FAISS~\citep{douze2024faiss} with domain-specific encoders (\textsc{all-mpnet-base-v2} for ESCI, \textsc{SIMCSE-Large} for H\&M). Additionally, swapping out the retrievers for product-search during evaluation (post-training) directly addresses \textbf{RQ3}. Models generate chain-of-thought reasoning followed by task outputs;  see Figs.~\ref{fig:esci_search_prompt}, \ref{fig:hnm_fashion_prompt} and \Cref{fig:hnm_summary_fullprompt} in Appendix~\ref{app:baselines} for detailed prompts.

\vspace{-2mm}
\paragraph{H\&M Expert Data Generation}

Since H\&M lacks explicit queries and user summaries, we generate synthetic training data using \texttt{Claude Sonnet~3.0} as an expert model: naturalistic history-grounded search queries for the product search task (Fig.~\ref{fig:hnm_prompt_example}), and chronologically-ordered next-purchase predictions defining winning/losing summaries for preference-tuning methods (\Cref{fig:hnm_summary_fullprompt}). See Appendix~\ref{app:expert_data_generation_prompts} for customer data sampling, prompts and expert validation including comparison of generated query distributions against ESCI and Amazon C4~\citep{hou2024bridging}.

\vspace{-2mm}
\paragraph{Baselines}
We evaluate \textsc{OSPO} against standard finetuning and RL baselines—\textbf{SFT}, \textbf{DPO}~\citep{rafailov2024direct}, and \textbf{GRPO}~\citep{shao2024deepseekmath}—using the same \textbf{Qwen2.5-Instruct 7B} base model across tasks.
SFT is trained on high-quality expert samples filtered by task performance (NDCG $\geq 0.3$ for search, $\geq 0.5$ for summarization), while DPO uses expert contrastive pairs differing by at least 5\% in average precision. GRPO serves as the reference value-free RL baseline~\citep{williams92} using the \textit{same} reward functions as \ospo{}—NDCG of the output refined query for search tasks using domain-specific retrievers, and rewards from a trained Bradley-Terry (BT)~\citep{BradleyTerry1952, Hong2024ORPOMP} loss-based reward model for the user profile summarization task. For additional 
credit assignment comparisons within our compute budget, we evaluate 
\textbf{GRPO-$\lambda$}~\citep{parthasarathi2025grpolambdacreditassignmentimproves} which propagates temporal 
credit via eligibility traces without a value model, and \textbf{PPO}~\citep{schulman2017proximalpolicyoptimizationalgorithms} that uses a learned critic 
at \textbf{1.5B} and \textbf{3B} scales on H\&M product search, and include \textbf{GPT-4o}~\citep{openai2024gpt4ocard} as a zero-shot upper bound.


 \vspace{-2mm}
\paragraph{Evaluation Settings}
We sample responses from trained baselines (incl. base models) on each task’s evaluation set using temperature $1.0$. For product search, performance is measured with retrieval metrics such as NDCG and average precision (AP) with candidate filtering from a pool\footnote{We provide details on the candidate pool construction in Appendix~\ref{app:baselines}.} after initial FAISS-based retrieval. For user profile summarization, we assess sampled summaries via an LLM judge (Qwen-3-Nemotron-32B-Reward;~\citet{wang2025helpsteer3preferenceopenhumanannotatedpreference}) using pairwise head-to-head win/loss/tie evaluations, applying a Borda-count~\citep{emerson2013original} that assigns $1$ for a win and $\tfrac{1}{2}$ for a tie.

\vspace{-2mm}
\section{Results}
\vspace{-2mm}
\begin{table*}[t]
\centering
\small
\setlength{\tabcolsep}{4.5pt}
\begin{tabular}{lcccc|cccc}
\toprule
\multirow{2}{*}{\textbf{Baseline}} & \multicolumn{4}{c|}{\textbf{ESCI Shopping Queries}} & \multicolumn{4}{c}{\textbf{H\&M Fashion}} \\
\cmidrule(lr){2-5} \cmidrule(lr){6-9}
 & \textbf{NDCG} & \textbf{AP} & \textbf{MRR} & \textbf{Recall} & \textbf{NDCG} & \textbf{AP} & \textbf{MRR} & \textbf{Recall} \\
\midrule

Qwen2.5-0.5B                 & 0.379 & 0.115 & 0.392 & 0.116 & 0.337 & 0.320 & 0.320 & 0.387 \\
Qwen2.5-1.5B                 & 0.439 & 0.137 & 0.454 & 0.138 & 0.348 & 0.330 & 0.330 & 0.402 \\
Qwen2.5-3B                   & 0.373 & 0.114 & 0.387 & 0.114 & 0.332 & 0.317 & 0.317 & 0.377 \\
Qwen2.5-32B                  & 0.496 & 0.158 & 0.514 & 0.160 & \underline{0.438} & \underline{0.421} & \underline{0.421} & \underline{0.490} \\
Qwen2.5-72B                  & \underline{0.543} & \underline{0.170} & \underline{0.564} & \underline{0.171} & 0.357 & 0.340 & 0.340 & 0.409 \\
\midrule
SFT (7B)                     & 0.398 & 0.119 & 0.411 & 0.120 & 0.373 & 0.355 & 0.355 & 0.424 \\
DPO (7B)                     & 0.431 & 0.133 & 0.441 & 0.133 & 0.396 & 0.378 & 0.378 & 0.449 \\
GRPO (7B)                    & 0.418 & 0.130 & 0.430 & 0.131 & 0.379 & 0.366 & 0.366 & 0.421 \\
\midrule
\textbf{\ospoRank{}} (7B) & 0.485 & 0.158 & 0.500 & 0.159 & 0.399 & 0.383 & 0.383 & 0.445 \\

\textbf{\ospoClip{}} (7B)  & 0.495& 0.157 & 0.508 & 0.159 & 0.403 & 0.389 & 0.389 & 0.446 \\
\textbf{\ospoProp{}} (7B)  & \textbf{0.522} & \textbf{0.166} & \textbf{0.541} & \textbf{0.167}  & \textbf{0.436} & \textbf{0.419} & \textbf{0.419} & \textbf{0.486} \\
 
\bottomrule
\end{tabular}
\caption{\textbf{Results on ESCI~\citep{reddy2022shopping} and H\&M Fashion Domains~\citep{h-and-m-personalized-fashion-recommendations}}. For all metrics, top-$K{=}1000$ retrieval is performed with candidate pools used for filtering relevant candidates post FAISS-based retrieval. We use a temperature of $1.0$ with sampling to sample all completions. \textbf{Bold} denotes best among 7B fine-tuned models; \underline{underline} denotes best overall.}
\label{tab:combined_search_results_esci_hnm}
\vspace{-5mm}
\end{table*}



Results in Table~\ref{tab:combined_search_results_esci_hnm} show performance metrics of baselines and \ospo{} variants on product search tasks. \ospoProp{} achieves the best performance among 7B models on all metrics, excelling on ESCI (NDCG 0.522, MRR 0.541) and performing strongly on the harder H\&M Fashion dataset (NDCG 0.436, MRR 0.419). Importantly, scaling alone is insufficient: \ospoProp{} at 7B surpasses the 32B model on ESCI (0.522 vs.\ 0.496) and nearly matches it on H\&M (0.436 vs.\ 0.438), while the 72B variant degrades sharply on H\&M (0.357), approaching but not matching \ospoProp{} even with 10$\times$ more parameters.  These gains hold across retrieval architectures, highlighting \ospo{}'s adaptability to different embedding spaces and query structures. While all models perform worse on H\&M due to its higher complexity—requiring integration of latent user preferences—\ospo{} retains its lead, indicating that coalition-based credit assignment effectively captures how query phrasing drives retrieval in both simple and context-rich settings.
 
\vspace{-2mm}
\paragraph{Comparison with Online RL and Offline Methods}
Compared to the online RL baseline GRPO, \ospoProp{} achieves 24.9\% higher NDCG on ESCI (0.522 vs.~0.418) and 15.0\% higher on H\&M (0.436 vs.~0.379). This improvement highlights \ospo{}’s advantage redistribution, which attributes reward more precisely to influential query segments. Over offline alignment methods, \ospo{} outperforms DPO by 21.1\% on ESCI (0.522 vs.~0.431) and 10.1\% on H\&M (0.436 vs.~0.396). While SFT and DPO improve upon base models, their lack of retrieval-grounded feedback limits active exploration and the ability to discern which query elements affect ranking. These shortcomings become more pronounced in the H\&M setting, where relevance depends on modeling user-specific preferences, often “latent” in the input query by design (Appendix~\ref{app:expert_data_generation_prompts}). \ospo{} bridges this gap through principled credit assignment, offering a clear advantage in aligning LLMs for retrieval.

\paragraph{Value Model-Free Marginal Attribution vs. Value Model-Based Credit}
Table~\ref{tab:hnm_wrap_credit_assignment} compares credit assignment 
methods across 1.5B and 3B scales on H\&M product search, with all RL 
baselines initialized from the \textit{same} SFT checkpoint. \begin{wraptable}{r}{0.45\textwidth}
\vspace{-12pt}
\centering
\small
\setlength{\tabcolsep}{4pt}
\renewcommand{\arraystretch}{0.95}

\begin{tabular}{lcccc}
\toprule
\textbf{Model} & \textbf{NDCG} & \textbf{AP} & \textbf{MRR} & \textbf{Recall} \\
\midrule
\multicolumn{5}{l}{\textit{Zero-shot}} \\
Qwen2.5-1.5B  & 0.304 & 0.287 & 0.287 & 0.353 \\
Qwen2.5-3B    & 0.384 & 0.367 & 0.367 & 0.434 \\
GPT-4o & \textbf{0.433} & \textbf{0.413} & \textbf{0.413} & \textbf{0.495} \\
\midrule
\multicolumn{5}{l}{\textit{1.5B fine-tuned}} \\
SFT           & 0.306 & 0.289 & 0.289 & 0.356 \\
GRPO          & 0.307 & 0.289 & 0.289 & 0.359 \\
GRPO-$\lambda$ & 0.317 & 0.299 & 0.299 & 0.366 \\
PPO           & 0.333 & 0.316 & 0.316 & 0.386 \\
\textbf{OSPO-Prop} & \textbf{0.361} & \textbf{0.342} & \textbf{0.342} & \textbf{0.417} \\
\midrule
\multicolumn{5}{l}{\textit{3B fine-tuned}} \\
SFT           & 0.390 & 0.373 & 0.373 & 0.441 \\
GRPO          & 0.385 & 0.368 & 0.368 & 0.434 \\
GRPO-$\lambda$ & 0.393 & 0.376 & 0.376 & 0.443 \\
PPO           & 0.403 & 0.385 & 0.385 & 0.456 \\
\textbf{OSPO-Prop} & \textbf{0.422} & \textbf{0.404} & \textbf{0.404} & \textbf{0.475} \\
\bottomrule
\end{tabular}

\vspace{-5pt}
\caption{H\&M product search performance across 1.5B and 3B scales. Full results shown in \Cref{tab:hnm_small_models_full} (Appendix~\ref{app:additional_results})}
\label{tab:hnm_wrap_credit_assignment}
\vspace{-12pt}
\end{wraptable}   OSPO outperforms all baselines at both scales, while all RL methods improve over the respective base models. At 1.5B, OSPO achieves 
$0.361$ NDCG vs.\ $0.307$ for GRPO, $0.317$ for GRPO-$\lambda$, and 
$0.333$ for PPO—the ordering reflects the credit assignment 
hierarchy: temporal credit via eligibility traces (GRPO-$\lambda$) improves over uniform 
assignment, value-model based credit (PPO) improves further via GAE, yet 
both fall short of OSPO's marginal attribution, which directly 
measures each segment's retrieval contribution without a critic. 
At 3B, GRPO degrades below SFT ($0.385$ vs.\ $0.390$) consistent with similar trends in prior work~\citep{samineni2025rlonlyanalyzingstructural}, while GRPO-$\lambda$ ($0.393$) 
and PPO ($0.403$) partially recover. \ospoProp{} ($0.422$) leads across all metrics at both scales—approaching GPT-4o ($0.433$) at 3B scale—confirming that segment-level marginal attribution is strictly more effective than temporal or value-model-based credit for contextualized product search (\textbf{RQ1}).

\begin{figure*}[t]
  \centering
  \includegraphics[width=\textwidth]{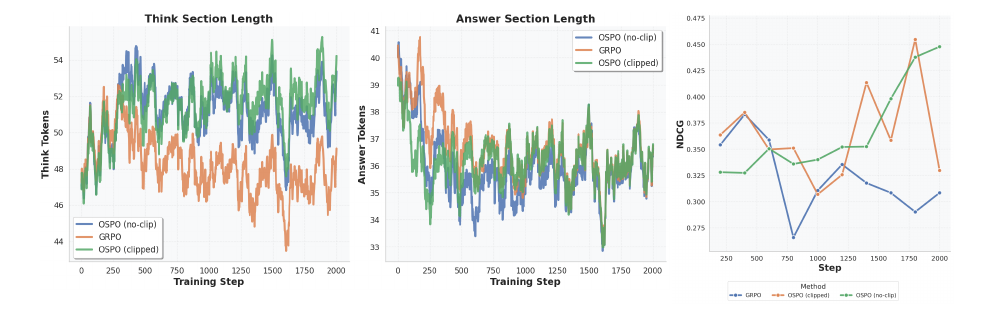}
    \captionsetup{skip=2pt, belowskip=-10pt}  
      \vspace{-5mm}  
 \caption{Left and middle: Chain-of-thought (CoT) lengths within \texttt{<think>} fields and refined query lengths within \texttt{<answer>} fields during RL training of OSPO variants and GRPO on the H\&M product search task. Right: evaluation performance on 500 randomly sampled H\&M test queries, measured every 200 training steps.}
  \label{fig:length_generalization}
\end{figure*}

\begin{wrapfigure}{r}{0.55\textwidth}
  \vspace{-2mm}
  \centering
  \includegraphics[width=0.55\textwidth]{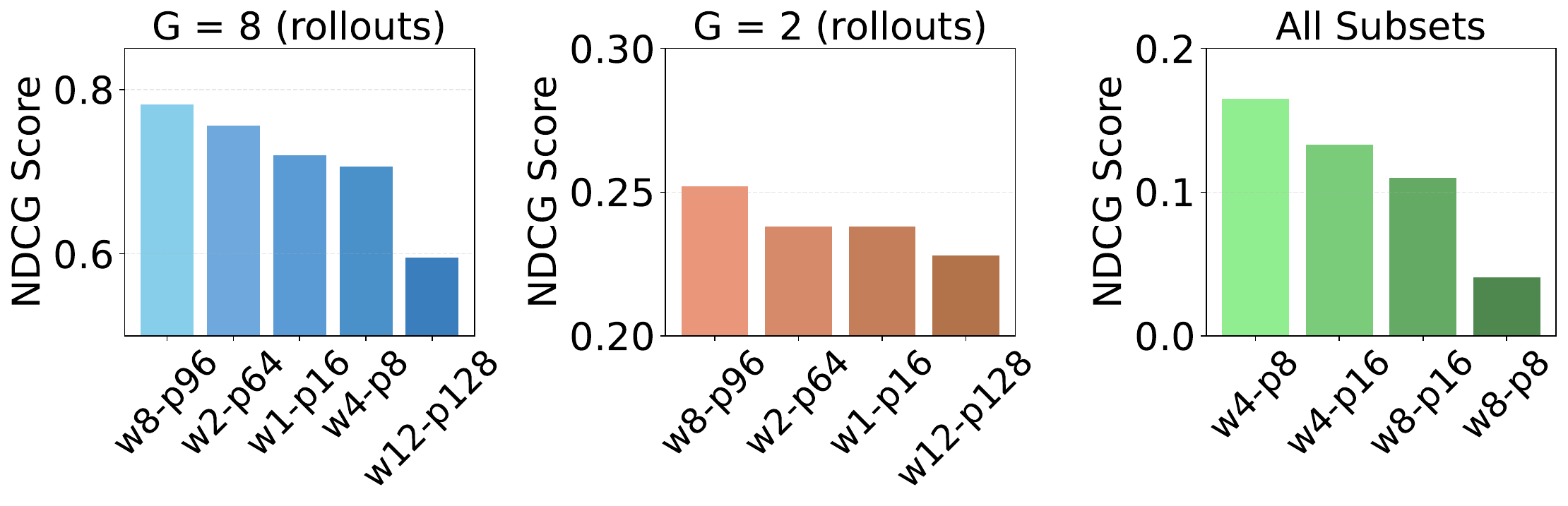}
  \caption{\ospoProp{} ablation results on varying coalition structure on ESCI product search task. \textbf{w} denotes the maximum coalition span ($w_{max}$) and \textbf{p} the number of coalitions sampled ($M$) for Owen value estimation in \ospo{} (\Cref{alg:ospo}).}
  \label{fig:ospo_ablation_plots_esci}
  \vspace{-4mm}
\end{wrapfigure}
\paragraph{Coalition Structure Drives OSPO Performance}
\vspace{-3mm}
OSPO’s performance is governed by the interaction between coalition width, exploration depth, and contiguity. As shown in Fig.~\ref{fig:ospo_ablation_plots_esci} (left), moderately wide coalitions (w8–p96) strike the best balance\footnote{The coalition budget $p{=}96$ upper-bounds the retrieval calls per response; in
typical H\&M responses (N=5--8 phrases, $w_{\max}{=}8$), $|\mathcal{C}|$ 
ranges from 13--40, so the actual coalitions evaluated falls well below the cap of 96 (See Appendix~\ref{ssec:compute_ospo_appendix}).} between local compositional context and sampling diversity (NDCG 0.782). Narrow coalitions overfit to token-level co-occurrences, while overly wide coalitions dilute credit assignment and degrade performance (NDCG 0.595). Exploration is equally critical: reducing generations (rollouts) from eight to two (Fig.~\ref{fig:ospo_ablation_plots_esci}, middle) causes a sharp drop in accuracy, highlighting the need for sufficient Monte Carlo sampling. Finally, removing the contiguity constraint (Fig.~\ref{fig:ospo_ablation_plots_esci}, right) leads to a dramatic collapse in performance (NDCG 0.113), as non-contiguous coalitions disperse credit across unrelated segments. Overall, OSPO performs best with contiguous, moderately wide coalitions and adequate sampling, yielding more stable and generalizable credit assignment.

\vspace{-1mm}
\paragraph{User Profile Quality Analysis}\begin{wraptable}{r}{0.42\textwidth}
\vspace{-4mm}
\centering
\footnotesize
\setlength{\tabcolsep}{5pt}
\renewcommand{\arraystretch}{1.1}
\begin{tabular}{@{}lcc@{}}
\toprule
\textbf{Pair} & \textbf{Ties} & \textbf{WR} \\
\midrule
\ospoRank{} vs.\ GRPO & 42 & 47.6 \small{(43.2)} \\
\ospoRank{} vs.\ DPO  & 47 & 45.8 \small{(43.8)} \\
\ospoRank{} vs.\ SFT  & 40 & \textbf{54.0} \small{(37.2)} \\
\addlinespace[2pt]
\ospoProp{} vs.\ GRPO & 38 & 49.1 \small{(42.5)} \\
\ospoProp{} vs.\ DPO  & 45 & 49.6 \small{(40.5)} \\
\ospoProp{} vs.\ SFT  & 81 & \textbf{53.3} \small{(28.9)} \\
\bottomrule
\end{tabular}
\caption{Pairwise win rates (WR) for user summary generation. 
$\mathrm{WR}_A = (\mathrm{wins}_A + 0.5\,\mathrm{ties})/\mathrm{total}$. 
Parentheses show the baseline's WR.}
\label{tab:ospo_wr_compact}
\vspace{-4mm}
\end{wraptable} Table~\ref{tab:ospo_wr_compact} reports pairwise win rates scored 
by Qwen-3-Nemotron-32B-Reward~\citep{wang2025helpsteer3preferenceopenhumanannotatedpreference}. 
Both \ospoRank{} and \ospoProp{} clearly outperform SFT (54.0\% and 
53.3\% respectively) and maintain competitive performance against 
GRPO and DPO (45.8--49.6\%), with baseline WRs of 40.5--43.8\% 
confirming OSPO wins more decisive head-to-head comparisons in all 
cases. These results are particularly notable given that OSPO and 
GRPO are both initialized directly from the base model without 
expert data warm-up, while SFT and DPO are trained directly on 
expert demonstrations. Unlike product search where 
phrase-level coalitions target the refined query directly, 
summarization employs sentence-level coalitions over an intermediate 
reasoning trace trained using a BT-reward model rather than 
a retriever, probing whether OSPO generalizes 
beyond retrieval-oriented generation (\textbf{RQ2}).  We provide a detailed 
case study on sentence-level Owen values and actual coalitions 
in Tables~\ref{fig:hnm_summary_fullprompt}--\ref{tab:user_summary_examples_last_table}.

\begin{wraptable}{r}{0.42\textwidth}
\centering
\scriptsize
\begin{tabular}{lcccc}
\toprule
\textbf{Method} 
& \multicolumn{2}{c}{\textbf{ESCI}} 
& \multicolumn{2}{c}{\textbf{H\&M}} \\
\cmidrule(lr){2-3} \cmidrule(lr){4-5}
& In & Cross & In & Cross \\
\midrule
GRPO        & 0.418 & 0.301 & 0.379 & 0.184 \\
OSPO (Rank) & 0.485 & 0.368 & 0.399 & 0.193 \\
OSPO (Prop) & 0.522 & 0.394 & 0.436 & 0.190 \\
\bottomrule
\end{tabular}
\caption{NDCG under retriever shift.}
\label{tab:cross_retriever_robustness_trimmed}
 
\end{wraptable}
 
\vspace{-2.5mm}

\paragraph{OSPO Demonstrates Superior Generalization}
Fig.~\ref{fig:length_generalization} (right) reveals that \ospoProp{} exhibits stable improvement on test data throughout training (reaching NDCG $\approx$ 0.45), while GRPO shows catastrophic performance collapse around step 400-750. This generalization gap manifests despite GRPO achieving higher training rewards (0.7603 vs. 0.6975 for OSPO), highlighting a “classic reward-hacking as overfitting” scenario. Two behavioral markers explain this pattern: First, GRPO~\citep{shao2024deepseekmath} \textit{systematically reduces thinking tokens} (roughly around 300th step mark) compared to \ospo{} variants precisely when test performance degrades (see Fig.~\ref{fig:length_generalization} (left)), while \ospo{} maintains consistently longer reasoning traces. Second, GRPO's reward variance drops sharply toward 0.30 by training end, compared to \ospo{}'s higher variance (0.34-0.36), indicating GRPO converges to consistent but limited solutions that exploit training patterns without transferring to unseen data. Additionally, \Cref{tab:cross_retriever_robustness_trimmed} reports a cross-retriever evaluation designed to test generalization across embedding spaces (\textbf{RQ3}). Models are trained using one retriever (ESCI with \textsc{all-mpnet-base-v2} or H\&M with \textsc{SIMCSE-Large}) and evaluated under a retriever shift using the alternate embedding model. Across both transfer directions, \ospo{} maintains strong ranking performance, while GRPO degrades substantially, indicating that Owen value-based attributions learn a more transferable and retriever-agnostic alignment signal. Further analysis and evidence of generalization to a non-retrieval open-ended generation task is provided in 
\Cref{tab:cross_retriever_results} and \Cref{fig:imdb_training_dynamics} 
in Appendix~\ref{app:additional_results}.

\vspace{-4mm}
\section{Conclusion and Future Work}
\vspace{-2mm}
We introduced \textbf{Owen–Shapley Policy Optimization (\ospo{})}, a reinforcement learning framework that directly tackles the credit assignment problem in language-driven recommendation systems. By decomposing sequence-level rewards into segment-level attributions through cooperative game theory, \textsc{OSPO} enables precise gradient updates without relying on auxiliary value networks. Across product search and user summarization tasks, \textsc{OSPO} consistently outperforms both online and offline baselines—including models an order of magnitude larger—while approaching frontier-level performance at just 3B scale. This efficiency arises from its attribution-driven optimization, which focuses updates on the sequence segments that meaningfully influence task outcomes. Even with unseen retrievers, \textsc{OSPO} preserves robust performance, showing that its alignment signal transfers across feedback models and representation spaces. Ablation studies further highlight the importance of its coalition structure: moderate-width, contiguous, and sufficiently sampled coalitions act as an intrinsic regularizer that stabilizes learning and improves generalization. In summary, \textsc{OSPO} learns not by memorizing feedback patterns but by leveraging \textit{principled coalition-based attributions that promote stable, transferable, and interpretable behaviors}. Future work could explore such “coalition-based” credits during multiturn interactions with user simulators~\citep{wucollabllm} or at various levels of abstractions—e.g., one could think of semantically similar sub-groups in GRPO across batches for meta-learning~\citep{zhang2025grpoleaddifficultyawarereinforcementlearning} or to leverage Owen values in multi-agent settings~\citep{nath2026craft} for the challenging problem of assigning agent-specific credits for optimal coordination.


\newpage
 
\bibliography{combined_deduped.bib}

\newpage
\appendix
\onecolumn
 
\section{Appendices}

\begin{table}[h]
\centering
\small
\setlength{\tabcolsep}{8pt}
\renewcommand{\arraystretch}{1.2}
\begin{tabular}{lp{9cm}}
\toprule
\textbf{Section} & \textbf{Contents} \\
\midrule
\Cref{app:proofs} & Extended theoretical derivations supporting the OSPO objective \\
\Cref{sec:redistribution} & Comparison to GRPO length-invariant credit redistribution \\
\midrule
\Cref{app:expert_data_generation_prompts} & Prompts and procedures for H\&M synthetic data generation \\
\Cref{app:baselines} & Baseline configurations and prompt examples \\
\Cref{app:additional_results} & Supplementary experimental results \\
\midrule
\Cref{ssec:coalition_formation} & Coalition formation implementation, complexity, and sampling strategies \\
\Cref{ssec:compute_ospo_appendix} & Owen-Shapley attribution validity, wall-clock breakdown vs.\ GRPO \\
\Cref{ssec:coalition_ablations} & Comprehensive ablation studies on coalition design choices \\
\Cref{ssec:coalition_learning_dynamics} & Coalition width and learning dynamics analysis \\
\Cref{ssec:coalition_learning_dynamics_smaller} & Coalition dynamics robustness under reduced model capacity \\
\bottomrule
\end{tabular}
\caption*{Appendix organization.}
\end{table}
\vspace{-3mm}

\vspace{-3mm}

\paragraph{Limitations.}
\textsc{OSPO} incurs additional computational overhead compared to standard policy-gradient methods, since estimating Owen values for advantage redistribution adds extra computation. While FAISS-based retrieval feedback keeps this cost manageable for product search, it becomes more noticeable when using slower reward models, as in user profile summarization. Still, the trade-off is favorable for precision-critical applications where \textsc{OSPO} achieves higher performance and efficiency than much larger models. Several practical challenges remain. Performance is sensitive to coalition width and sampling depth, making hyperparameter tuning crucial but challenging to the practitioner. Additionally, Monte Carlo approximations of Owen values during coalition sampling may underrepresent long-range dependencies between non-adjacent segments, and model stability can vary with feedback quality across domains. Finally, our evaluation spans only two recommendation settings and single-turn RL; extending \textsc{OSPO} to multi-turn, conversational, and multimodal environments remains promising future work.
\vspace{-3mm}
\paragraph{Reproducibility Statement}
We are committed to full reproducibility. Our implementation for our main experimental pipeline is based on the TRL 
GRPO trainer\footnote{\url{https://huggingface.co/docs/trl/main/en/grpo_trainer}} 
with custom modifications for OSPO.

\subsection{More Details on OSPO Objectives}
\label{app:proofs}
\begin{lemma}[Length-Invariant Owen Redistribution]
\label{lemma:length_invariant}
Let $y^{(g)}$ be a generated sequence of length $T$ from a Monte Carlo rollout with reward $R^{(g)}$. Following the REINFORCE algorithm~\citep{williams92,kool2019buy, shao2024deepseekmath} of sampling rollouts without replacement with baseline subtraction, define the sequence-level advantage as:
\begin{equation}
\hat{A}^{(g)} = R^{(g)} - \bar{R}, \quad \text{where } \bar{R} = \frac{1}{G}\sum_{g'=1}^G R^{(g')},
\end{equation}
is the mean reward across $G$ sampled rollouts. Let $\{\phi_t^{(g)}\}_{t=1}^T$ denote Owen values satisfying the efficiency axiom $\sum_{t=1}^T \phi_t^{(g)} = R^{(g)}$~\citep{cheng2014owen,owen2014sobol}. Define the normalized Owen values as $\tilde{\phi}_t^{(g)} = \phi_t^{(g)} / \sum_{t'=1}^T \phi_{t'}^{(g)}$, and the redistributed token advantages as:
\begin{equation}
A_t^{(g)} = T \cdot \tilde{\phi}_t^{(g)} \cdot \hat{A}^{(g)}.
\end{equation}
Then the average token advantage equals the original sequence-level advantage, independent of sequence length:
\begin{equation}
\frac{1}{T}\sum_{t=1}^T A_t^{(g)} = \hat{A}^{(g)}.
\end{equation}
\end{lemma}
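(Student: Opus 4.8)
The plan is to prove the identity by direct substitution, since the whole statement collapses to a single cancellation followed by a normalization fact. First I would substitute the definition $A_t^{(g)} = T\cdot\tilde{\phi}_t^{(g)}\cdot\hat{A}^{(g)}$ into the left-hand side, giving $\frac{1}{T}\sum_{t=1}^T A_t^{(g)} = \frac{1}{T}\sum_{t=1}^T T\,\tilde{\phi}_t^{(g)}\,\hat{A}^{(g)}$. Because $T$ and $\hat{A}^{(g)}$ do not depend on the summation index $t$, I would pull them out of the sum, and the leading $\frac{1}{T}$ cancels the explicit factor $T$, leaving $\hat{A}^{(g)}\sum_{t=1}^T \tilde{\phi}_t^{(g)}$.

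The only remaining step is to verify $\sum_{t=1}^T \tilde{\phi}_t^{(g)} = 1$. This is immediate from the definition of the normalized weights: $\sum_t \tilde{\phi}_t^{(g)} = \sum_t \phi_t^{(g)} / \sum_{t'}\phi_{t'}^{(g)} = \big(\sum_t \phi_t^{(g)}\big)/\big(\sum_{t'}\phi_{t'}^{(g)}\big) = 1$, since the numerator and denominator are the identical sum. Substituting back gives $\hat{A}^{(g)}\cdot 1 = \hat{A}^{(g)}$, which is the claim, and the explicit disappearance of $T$ is precisely the length-invariance asserted.

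There is no genuine analytical obstacle: the result is a one-line consequence of the normalized weights forming a probability simplex. The point actually worth flagging is a logical one. The proof uses \emph{only} the normalization constraint $\sum_t \tilde{\phi}_t^{(g)}=1$ and does \emph{not} invoke the efficiency axiom $\sum_t \phi_t^{(g)} = R^{(g)}$; efficiency explains \emph{why} normalization is needed (and why the un-normalized proportional scheme $\tilde{A}_t = \hat{A}^{(g)}\tilde{\phi}_t$ would dilute gradients in long sequences), but it is not logically required for length-invariance itself. The one real caveat I would address is the degenerate case $\sum_{t'}\phi_{t'}^{(g)} = 0$ — which under efficiency corresponds to $R^{(g)}=0$ — where $\tilde{\phi}_t^{(g)}$ is undefined; I would resolve this by setting $A_t^{(g)}=0$ in that case, consistent with there being no reward signal to redistribute.
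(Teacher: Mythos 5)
Your proof is correct and follows essentially the same route as the paper's: substitute the definition of $A_t^{(g)}$, cancel the explicit factor of $T$ against the $\tfrac{1}{T}$ average, and use the fact that the normalized weights $\tilde{\phi}_t^{(g)}$ sum to one. Your two added observations---that the efficiency axiom is not logically needed for the identity itself (only the normalization is), and that the degenerate case $\sum_{t'}\phi_{t'}^{(g)}=0$ leaves $\tilde{\phi}_t^{(g)}$ undefined and must be handled by convention---are both valid refinements that the paper's proof does not make explicit.
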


\begin{proof}
To prove this, starting from the definition of the redistributed token advantages, we compute the average:
\begin{align}
\frac{1}{T}\sum_{t=1}^T A_t^{(g)} 
&= \frac{1}{T}\sum_{t=1}^T \left( T \cdot \tilde{\phi}_t^{(g)} \cdot \hat{A}^{(g)} \right) \\
&= \frac{T}{T} \cdot \hat{A}^{(g)} \sum_{t=1}^T \tilde{\phi}_t^{(g)} \\
&= \hat{A}^{(g)} \sum_{t=1}^T \tilde{\phi}_t^{(g)}.
\end{align}

Now we evaluate the sum of normalized Owen values. By the definition of $\tilde{\phi}_t^{(g)}$:
\begin{align}
\sum_{t=1}^T \tilde{\phi}_t^{(g)} 
&= \sum_{t=1}^T \frac{\phi_t^{(g)}}{\sum_{t'=1}^T \phi_{t'}^{(g)}} \\
&= \frac{1}{\sum_{t'=1}^T \phi_{t'}^{(g)}} \sum_{t=1}^T \phi_t^{(g)} \\
&= \frac{\sum_{t=1}^T \phi_t^{(g)}}{\sum_{t'=1}^T \phi_{t'}^{(g)}} \\
&= \frac{\sum_{t=1}^T \phi_t^{(g)}}{\sum_{t=1}^T \phi_t^{(g)}} \\
&= 1.
\end{align}

Substituting this result back into equation (3), we obtain:
\begin{equation}
\frac{1}{T}\sum_{t=1}^T A_t^{(g)} = \hat{A}^{(g)} \cdot 1 = \hat{A}^{(g)}.
\end{equation}

This equality holds for any sequence length $T > 0$, proving that the redistribution scheme is length-invariant.
\end{proof}

\begin{remark}
Lemma~\ref{lemma:length_invariant} guarantees three critical properties for OSPO's credit assignment:
\begin{enumerate}
    \item \textbf{Scale Preservation:} The average token advantage matches the original sequence-level advantage $\hat{A}^{(g)} = R^{(g)} - \bar{R}$, ensuring gradient magnitudes in \Cref{eq:ospo_obj} remain comparable across sequences of different lengths.
    
    \item \textbf{Length Invariance:} A sequence's total gradient contribution depends on its advantage $\hat{A}^{(g)}$ and Owen-based credit distribution $\{\tilde{\phi}_t^{(g)}\}$, not its verbosity. This eliminates the length bias where longer sequences (e.g., detailed user summaries with $T > 100$) would otherwise receive disproportionately small gradient signals when Owen values are naively redistributed.
    
    \item \textbf{Credit Prioritization:} High-value tokens (e.g., product-specific attributes) receive proportionally more advantage than low-value tokens (e.g., filler words or inconsistent reasoning), while maintaining consistency with the per-token policy gradient formulation. Specifically, if $\tilde{\phi}_i > \tilde{\phi}_j$, then $A_i^{(g)} > A_j^{(g)}$, preserving the ordinal ranking of token contributions.
\end{enumerate}
\end{remark}

\begin{remark}[Connection to the Efficiency Axiom]
The normalization step $\tilde{\phi}_t = \phi_t / \sum_{t'} \phi_{t'}$ is essential because Owen values satisfy the efficiency axiom: $\sum_{t=1}^T \phi_t = R^{(g)}$ (the terminal reward). Without the length multiplication factor $T$, a naive proportional redistribution $\tilde{A}_t = \hat{A}^{(g)} \cdot \tilde{\phi}_t$ would yield an average advantage of:
\begin{equation}
\frac{1}{T}\sum_{t=1}^T \tilde{A}_t = \frac{\hat{A}^{(g)}}{T} \sum_{t=1}^T \tilde{\phi}_t = \frac{\hat{A}^{(g)}}{T},
\end{equation}
introducing a $1/T$ dilution effect where longer sequences receive significantly weaker gradient signals. The multiplication by $T$ precisely counteracts this dilution, ensuring that gradient magnitude depends on the quality of the generation (via $\hat{A}^{(g)}$) rather than its length.  Algorithm~\ref{alg:ospo} provides the detailed OSPO training  procedure. 
\end{remark} 

\begin{algorithm}[t]
\scriptsize
\caption{Owen-Shapley Policy Optimization (OSPO)}
\label{alg:ospo}
\begin{algorithmic}[1]
\Require Policy $\pi_\theta$, dataset $\mathcal{D}$, reward $r$, segmentation $\text{Segment}(\cdot)$,  coalition width $w_{\max}$, samples $M$, group size $G$
\For{$x \sim \mathcal{D}$}
    \State Sample $G$ responses $\{y^{(g)}\}_{g=1}^G \sim \pi_{\theta_{\text{old}}}(\cdot \mid x)$
    \For{$g = 1, \ldots, G$}
        \State Segment $\mathcal{A}^{(g)} = \text{Segment}(y^{(g)})$ and compute $r^{(g)} = r(x, y^{(g)})$

       \State Sample $M$ coalitions $\{S_m\}_{m=1}^M$ with $S_m \subseteq \{1,\ldots,N\}$, $|S_m| \leq w_{\max}$
        \State Compute Owen values: $\phi_j = \frac{1}{|C_j|} \sum_{S \in C_j} [v(S \cup \{j\}) - v(S)]$
        \State Map to tokens: $\boldsymbol{\phi}^{(g)} = (W^{(g)})^\top \boldsymbol{\phi}^{(g)}_{\text{seg}}$
        \State Normalize: $\tilde{\phi}_t^{(g)} = \phi_t^{(g)} / \sum_{t'} \phi_{t'}^{(g)}$
        \State Compute advantage: $\hat{A}^{(g)} = (r^{(g)} - \bar{r}) / \sigma_r$
        \State Redistribute: $A_t^{(g)} = T \cdot \tilde{\phi}_t^{(g)} \cdot \hat{A}^{(g)}$
        \State Compute ratio: $\rho_t^{(g)} = \pi_\theta(y_t^{(g)} \mid x, y_{<t}^{(g)}) / \pi_{\theta_{\text{old}}}(y_t^{(g)} \mid x, y_{<t}^{(g)})$
    \EndFor
    \State Update: $\mathcal{L}_{\text{OSPO}} = \frac{1}{GT}\sum_{g,t} \min(\rho_t^{(g)} A_t^{(g)}, \text{clip}_\epsilon(\rho_t^{(g)}) A_t^{(g)})$
\EndFor
 
\end{algorithmic}
\end{algorithm}

\subsection{Length-Invariant Credit Redistribution for GRPO}
\label{sec:redistribution}

Group Relative Policy Optimization (GRPO) \citep{shao2024deepseekmath} computes sequence-level advantages by normalizing rewards within a sampled group, but assigns this advantage uniformly to all tokens in the sequence. This uniform assignment creates a fundamental tension: when combined with the standard per-token gradient normalization $\frac{1}{|y|}$, GRPO exhibits length-dependent bias where positive advantages produce stronger gradients for shorter sequences, while negative advantages under-penalize verbose failures \citep{liu2025understandingr1zeroliketrainingcritical}. This bias is particularly problematic in recommendation tasks where response length varies significantly—from terse product queries to detailed user preference summaries.

Our approach leverages Owen-Shapley values to redistribute the sequence-level GRPO advantage to individual tokens while explicitly eliminating length bias. Given a sequence $y$ with GRPO advantage $\hat{A}$ and Owen values $\{\Phi_t\}_{t=1}^{|y|}$ (obtained via the efficient approximation in~\Cref{ssec:owen-shap-values}), we face a redistribution challenge. A naive proportional approach would compute token advantages as $\tilde{A}_t = \hat{A} \cdot \frac{\Phi_t}{\sum_{t'} \Phi_{t'}}$. However, since Owen values satisfy the efficiency axiom ($\sum_t \Phi_t = R$, where $R$ is the terminal reward), this denominator equals the reward $R$, causing per-token advantages to scale as $\hat{A}/R$. For sequences with correlated advantages and rewards, this creates the same length bias as vanilla GRPO, where the average token advantage $\frac{1}{|y|}\sum_t \tilde{A}_t \propto \frac{1}{|y|}$.

To eliminate this dependency, we propose \textbf{two }redistribution strategies that multiply the proportional allocation by sequence length $|y|$, ensuring the average token advantage equals the original GRPO advantage regardless of sequence length. This length normalization is critical: it decouples gradient magnitude from verbosity while preserving Owen-based prioritization of important tokens.

\paragraph{Proportional Owen Redistribution} Our first approach uses Owen values directly as importance weights:
\begin{equation}
\tilde{A}_t^{\text{prop}} = \hat{A} \cdot \frac{\Phi_t}{\sum_{t'} \Phi_{t'}} \cdot |y|
\end{equation}
This strategy assigns credit proportional to each token's marginal contribution, as quantified by its Owen value. Tokens in high-value\footnote{We assume that we have access to a reasonably efficient and realistic retrieval system that is more likely pull up relevant articles for product-specific attributes, while lowering those of filler words.} phrases (e.g., ``noise cancelling'' in a headphone query) receive proportionally more advantage than tokens in low-value phrases (e.g., filler words). The length multiplication ensures that $\frac{1}{|y|}\sum_t \tilde{A}_t^{\text{prop}} = \hat{A}$, making the average token advantage independent of sequence length. Our experiments in product search tasks suggests that this redistribution is particularly effective when Owen values accurately reflect token importance—typically in short sequences with clear, attributable contributions.

\paragraph{Rank-Based Owen Redistribution} Our second approach converts Owen values to ordinal ranks before redistribution:
\begin{equation}
\text{rank}_t = \text{argsort}(\{\Phi_{t'}\}), \quad \tilde{A}_t^{\text{rank}} = \hat{A} \cdot \frac{\text{rank}_t}{\sum_{t'} \text{rank}_{t'}} \cdot |y|
\end{equation}
This rank transformation acts as a form of regularization on the Owen-based credit assignment. While proportional redistribution can concentrate credit heavily on a few high-value tokens (e.g., one token receiving 80\% of the advantage), rank-based redistribution spreads credit more uniformly according to relative importance order. For a sequence of length $N$, the top-ranked token receives weight $\frac{N-1}{\sum_{k=0}^{N-1}k} = \frac{2}{N}$ instead of a potentially much larger proportion. This uniform spreading is advantageous in two scenarios: (1) when Owen value estimates are noisy due to complex token interactions in long sequences, the ordinal transformation filters magnitude errors while preserving relative importance; (2) when task success requires multiple complementary components (e.g., multiple attributes in a product query), rank-based redistribution prevents over-concentration on a single dominant feature and encourages the model to improve all relevant tokens.

Both strategies maintain the theoretical guarantees of potential-based reward shaping \citep{ng1999policy}. The choice between proportional and rank-based redistribution depends on the task structure: proportional excels when token contributions are clearly separable and accurately measurable, while rank-based provides robustness when contributions are compositional and interdependent.

\subsection{Expert Data Generation}
\label{app:expert_data_generation_prompts}

Unlike ESCI~\citep{reddy2022shopping}, the H\&M Fashion Recommendation dataset~\citep{h-and-m-personalized-fashion-recommendations} lacks explicit customer queries—crucial for evaluating search-oriented LLMs. To ensure a fair comparison of \textsc{OSPO} with SFT, preference-tuning, and RL baselines (e.g., GRPO), we conduct an \textbf{expert data generation phase} using \texttt{Claude Sonnet 3.0}. The expert model synthesizes missing annotations from sampled customer profiles drawn from rich transactional histories. Specifically, it produces \textbf{(i)} next-purchase predictions and preference-grounded summaries for the \textit{user summarization} task, and \textbf{(ii)} naturalistic, history-conditioned queries for the \textit{contextualized product search} task (see \Cref{fig:hnm_prompt_example}). The full pipeline proceeds as follows.

\paragraph{Customer Data Preparation}
We construct a sequential recommendation dataset by segmenting each customer’s purchase history into temporally ordered windows. A customer is retained if they have at least five prior and four subsequent purchases, providing adequate temporal context while including light users. To maintain computational efficiency, we subsample a fixed number of eligible customers, yielding a balanced and diverse training cohort.

\paragraph{Expert Data Generation}
Following prior work on RL-based sequential recommendation~\citep{wu2025rlpfreinforcementlearningprediction}, we generate \textit{expert user summaries} that compress behavioral histories into concise representations. Because no ground truth exists for such summaries or reasoning trails, the expert LLM produces Chain-of-Thought (CoT) explanations grounded in observed behavior—capturing both short- and long-term preferences. Unlike~\citet{wu2025rlpfreinforcementlearningprediction}, which predicts only the next item, our expert forecasts entire future sequences, pruning implausible candidates while reasoning over temporal dependencies. These CoT traces form structured summaries encoding customer intent and preference dynamics.

\paragraph{SFT and DPO Trajectories for User Profile Summarization}
From the generated expert data, we construct SFT and DPO~\citep{rafailov2024direct} training sets based on the expert’s performance in sequential purchase prediction, where future items are ranked in temporal order. To ensure data quality, we retain only samples where the expert’s NDCG exceeds a threshold $T=0.5$, tuned empirically. These high-performing traces constitute the SFT data. For DPO, we obtain contrastive pairs by sampling two reasoning traces per example—keeping the higher-NDCG trace as the “winning” response and the lower one as the “losing” response. This consistent expert-driven generation enables fair comparative analysis across SFT, DPO, and RL-based methods trained under identical supervision signals.

\paragraph{Contextualized Query Generation}
For the contextualized product search task, we prompt the same expert model to generate realistic, user-grounded fashion queries conditioned on each customer’s purchase history and item context. Using the role-based prompt in \Cref{fig:hnm_prompt_example}, the expert produces diverse, natural-language queries that mimic how real users express shopping intent—balancing explicit style and brand references with implicit preferences, seasonal cues, and underspecified needs. Each generated query is paired with a detailed item rationale, enabling the resulting dataset to capture nuanced intent representations crucial for training and evaluating OSPO on fashion-specific retrieval tasks.

\paragraph{Comparison of Expert-generated H\&M Queries with Amazon C4 and ESCI} For additional validation, we compare the linguistic characteristics of H\&M expert-generated queries above against ESCI~\citep{reddy2022shopping}---a dataset of real Amazon user query logs covering 3M items across 33 categories---and Amazon C4~\citep{hou2024bridging}, a benchmark of “complex” customer queries grounded in purchase history. As shown in Table~\ref{tab:linguistic}, H\&M queries closely mirror the C4 paradigm across all dimensions: high first-person framing (96.7\% vs.\ 99.3\%), substantial comparison and negation usage (37.7\% vs.\ 14.8\%), and strong occasion-driven intent (56.2\% vs.\ 10.5\%). Lexical diversity (MATTR, window=50) is also comparable, confirming that H\&M queries exhibit the naturalistic variability of real shopping language rather than templated generation.

\begin{table}[h]
\centering
\small
 
\begin{tabular}{lccc}
\toprule
\textbf{Metric} & \textbf{ESCI} & \textbf{Amazon C4} & \textbf{H\&M (ours)} \\
\midrule
First-person      & 0.4\%  & 99.3\% & 96.7\% \\
Comparison/negation & 0.1\% & 14.8\% & 37.7\% \\
Occasion-driven   & 0.9\%  & 10.5\% & 56.2\% \\
MATTR (window=50) & 0.998  & 0.864  & 0.674  \\
\bottomrule
\end{tabular}
\caption{Linguistic comparison of query distributions across ESCI, Amazon C4, and H\&M expert-generated queries.}
\label{tab:linguistic}
\end{table}

\paragraph{Independent Validation with GPT-4o Judge} To further validate expert-generated query quality (originally generated using the prompt shown in \Cref{fig:hnm_prompt_example}), we conduct an (\Cref{fig:hnm_expert_validation_gpt}) independent GPT-4o judge evaluation  on 600 randomly sampled query-item pairs (200 per seed, $n{=}3$ seeds) from the evaluation set of H\&M queries used in our main experiments (\Cref{sec:experimental_setup}). Each query is evaluated against the \textbf{ground-truth purchased item} and its associated item description, providing the judge with the \textbf{same purchase signal} used to generate the query, enabling assessment of whether the expert model produces queries that genuinely reflect real purchase intent rather than generic shopping language. The judge scores each query on \textbf{three} dimensions: \textit{relevance} (does it plausibly surface the target item?), \textit{naturalness} (does it reflect how a real fashion customer expresses shopping needs?), and \textit{groundedness} (does it reflect specific, purchase-grounded intent?), each rated on a 1--5 Likert scale with ground-truth purchase context provided. As reported in Table~\ref{tab:gpt4o_val}, expert-generated queries are judged highly natural ($4.96/5$, $\sigma{=}0.19$) and grounded ($4.75/5$, $\sigma{=}0.63$). Relevance variance ($\sigma{=}1.12$) reflects the inherent difficulty of anticipating a specific target item from purchase history alone; the mean score ($4.46/5$) nonetheless confirms strong alignment between query intent and retrieval target.

\begin{table}[h]
\centering
\small
 
\begin{tabular}{lcc}
\toprule
\textbf{Dimension} & \textbf{Mean} & \textbf{Std} \\
\midrule
Relevance    & 4.46 & 1.12 \\
Naturalness  & 4.96 & 0.19 \\
Groundedness & 4.75 & 0.63 \\
\bottomrule
\end{tabular}
\caption{GPT-4o independent judge scores for H\&M expert-generated queries across 600 query-item pairs (200 per seed, $n{=}3$). Prompt used for this evaluation is shown in \Cref{fig:hnm_expert_validation_gpt}.}
\label{tab:gpt4o_val}
\end{table}

Together, these results confirm that H\&M expert queries exhibit the linguistic register and intent structure of real contextualized shopping queries, thereby validating our experimental design for contextualized product search in \Cref{sec:experimental_setup}.

\begin{table}[t]
\centering
\setlength{\tabcolsep}{3pt}      
\renewcommand{\arraystretch}{0.92}
\footnotesize
\begin{tabular}{lccc}            
\toprule
\textbf{Metric} & \textbf{GPT-4o} & \textbf{Claude-Sonnet Run 1} & \textbf{Claude-Sonnet Run 2} \\
\midrule
Hit@1                & 0.2622 & 0.2719 & 0.2715 \\
MRR                  & 0.4995 & 0.5021 & 0.5056 \\
NDCG                 & 0.7413 & 0.7394 & 0.7412 \\
Spearman             & 0.4747 & 0.4723 & 0.4708 \\
Hit@3                & 0.6615 & 0.6593 & 0.6729 \\
MRR@3                & 0.4393 & 0.4406 & 0.4458 \\
NDCG@3               & 0.5970 & 0.5951 & 0.5990 \\
Hit@5                & 0.8095 & 0.8218 & 0.8305 \\
MRR@5                & 0.4730 & 0.4773 & 0.4817 \\
NDCG@5               & 0.6683 & 0.6680 & 0.6719 \\
Hit@7                & 0.9179 & 0.9272 & 0.9377 \\
MRR@7                & 0.4901 & 0.4938 & 0.4985 \\
NDCG@7               & 0.7125 & 0.7118 & 0.7166 \\
Hit@9                & 0.9835 & 0.9834 & 0.9843 \\
MRR@9                & 0.4979 & 0.5005 & 0.5041 \\
NDCG@9               & 0.7359 & 0.7342 & 0.7364 \\
\bottomrule
\end{tabular}
\caption{Overall averages across 5700 samples from the  H\&M Fashion Recommendations data~\citep{h-and-m-personalized-fashion-recommendations} for expert data generation on user summary generation on H\&M Fashion Recommendations data. Expert models are instructed to reason about the user's preferences based on their purchase history, before predicting the next purchases items, conditioned on this reasoning trace. 
  }
\label{tab:gpt_claude_combined}
\end{table}

 \begin{figure}[t]
\centering
\small
\begin{tcolorbox}[
colback=white, colframe=black!70,
    title={\small\textbf{GPT-4o Judge Prompt for H\&M Query Validation}},
    fonttitle=\bfseries,
    breakable
]
\ttfamily\small
You are evaluating the quality of a customer search query in a fashion retrieval system.

\medskip
\textbf{Given:}
\begin{itemize}[leftmargin=1.5em, topsep=2pt, itemsep=1pt]
    \item \textbf{Customer query:} a search expression the customer used
    \item \textbf{Target item:} the item they actually purchased
    \item \textbf{Item context:} a description of why the item matches the query
\end{itemize}

\medskip
Rate the query on three dimensions (1--5 each):

\medskip
\textbf{1. RELEVANCE:} Does the query plausibly lead to this target item? (5 = directly describes the item, 1 = completely unrelated)

\medskip
\textbf{2. NATURALNESS:} Does the query reflect how a real fashion customer thinks and expresses shopping needs? (5 = genuine customer intent with personal context and natural conversational tone, 1 = generic or reads like a product description)

\medskip
\textbf{3. GROUNDEDNESS:} Does the query reflect real shopping intent with specific context? (5 = rich context with specific needs and personal references, 1 = vague or generic)
\medskip
\textbf{Query:} \{query\}\\
\textbf{Target item:} \{item\_context\}

\medskip
Respond ONLY with valid JSON:\\
\{"relevance": \textlangle1--5\textrangle, "naturalness": \textlangle1--5\textrangle, "groundedness": \textlangle1--5\textrangle, "reasoning": "\textlangle one sentence\textrangle"\}
\end{tcolorbox}
\caption{Prompt used for independently validating expert-model generated  H\&M contextualized queries using the GPT-4o model as LLM Judge.}
\label{fig:hnm_expert_validation_gpt}
\end{figure}

\paragraph{Prompt Construction for User Profile Generation and Expert Validation}
To instantiate this task for large language models, we construct prompts that embed
both the historical prefix and the candidate future set. Each prompt begins with an
explicit task description, e.g., \emph{``You are an expert sequential recommendation
system. Given the customer's purchase history and buying behavior, predict the exact
order in which the customer will buy the next few products.''} This instruction
enforces that the output must be a \textbf{complete ordering} rather than isolated
recommendations, capturing temporal and long-range dependencies in user behavior.
The history is serialized in natural language with timestamps, prices, and product
descriptions (e.g., \emph{``Transaction 3: 2018-12-20 | Price: \$0.01 -- Russell
Tube -- Scarf in Beige \ldots''}), providing temporal and semantic context, similar
to~\citep{wu2025rlpfreinforcementlearningprediction}. Candidate items are presented
as a numbered list with article IDs and textual metadata, and the model is asked to
\emph{``rank all candidate products in chronological order, with the leftmost article
ID being purchased first.''} To encourage structured reasoning, we include a
chain-of-thought directive: \emph{``Please think step by step about the purchase
sequence and output your reasoning within \texttt{<think>} and \texttt{</think>}
tokens.''}  The model then outputs a ranked list of purchase items directly aligned
with the ground-truth chronological sequence, enabling automatic and verifiable
evaluation. See \Cref{fig:hnm_summary_fullprompt} for the full prompt with outputs. 

To validate the expert model used for generating user profile summaries, we evaluate
its sequential next-item prediction performance on the full set of customer contexts
used for training and evaluation (5{,}700 prompt contexts; \Cref{sec:experimental_setup}).
The rationale is that accurate next-item prediction requires coherent reasoning about
latent user preferences---the same reasoning that conditions the quality of the
generated profile summary. An expert that ranks held-out future purchases highly must
have internalized a faithful representation of the user's preference profile; poor
profile summaries would produce random or inconsistent rankings. \Cref{tab:gpt_claude_combined}
reports results across GPT-4o and \textbf{two} independent runs of Claude Sonnet~3.0. All
models achieve consistent performance ($\text{NDCG} \approx 0.74$, Spearman
$\approx 0.47$), with negligible variance across independent runs (NDCG range
$< 0.003$), confirming that the expert reliably infers user preferences and that the
generated profile summaries constitute stable, high-quality supervision signal for
downstream alignment.

\begin{figure}[t]
  \centering
  \begin{minipage}{0.95\linewidth}
    \begin{quote}
\textbf{System Prompt:} You are an expert in query expansion for product retrieval. Your task: enrich the customer\'s query with product attributes and context that improve dense retrieval accuracy. \textbf{Important: RETAIN the original query terms and ADD relevant product details.} Your expansion should be a natural phrase combining the original query with product attributes.

\textbf{Strategy:}
\begin{enumerate}
    \item Keep all original query terms
    \item Add product category if not explicit (e.g., "laptop" for vague queries)
    \item Include key distinguishing attributes: brand, model, specs, materials, use-case
\end{enumerate}

Your expansion should be a natural phrase combining the original query with product attributes, NOT a first-person request. Format your response as follows:
First, analyze the query in \texttt{<think>} tags: What product category is this? What key attributes would help distinguish the right item from similar products? What buyer intent or use-case is implied?
\newline
Then, output your enriched search query in \texttt{<answer>} tags as a natural phrase combining the original query with product attributes.
\newline

\textbf{User Input:} \texttt{<INPUT QUERY> \textcolor{purple}{(smith and wesson 642 holster)}}
    \end{quote}
  \end{minipage}
\caption{Full prompt used for training and evaluation on the ESCI product search dataset~\citep{reddy2022shopping}. Following~\citep{lin2025rec}, we simplify the format by directly requesting the refined query within \texttt{<answer>} fields instead of JSON-style \texttt{<query>} tags. Only the text within \texttt{<answer>} tags is used for Owen value computations in \textsc{OSPO}, while the \texttt{<think>} section supports intermediate CoT reasoning. The example query shown in \textcolor{purple}{purple} is drawn from the dataset.}

  \label{fig:esci_search_prompt}
\end{figure}

\begin{figure}[t]
  \centering
  \begin{minipage}{0.95\linewidth}
    \begin{quote}
\textbf{System Prompt:} You are a query refinement specialist. Your task is to transform customer queries into product-focused descriptions while retaining all essential information from the original query. 

\textbf{Write your analysis in \texttt{<think>} tags by first identifying the customer's core need}, then noting the specific attributes they mentioned (fit, color, style, occasion, material), and finally checking if purchase history reveals relevant patterns. 

Then \textbf{output your refined query in \texttt{<answer>} tags as 2--3 natural sentences} written as a product description. Start directly with product features. Focus on materials, style, and how it addresses the need. Do \textbf{not} start with phrases like ``The customer needs'' or invent details not present in the query or purchase history. Add coordination with purchase history items only when a clear pattern exists.

\textbf{User Input:} \\
\texttt{<INPUT QUERY> \textcolor{purple}{(I need a new black top, maybe something with embroidery or smocking. Not too tight or low-cut, I prefer a relaxed fit.)}} \\[4pt]
\textbf{Purchase History:} \\
\small
\texttt{Pirat leggings -- Leggings/Tights in Black \\ 
DIV Swift jumper -- Sweater in Dark Blue \\ 
CHERRY JRSY SKIRT -- Skirt in Black \\ 
Control Lt 40 den 2p Tights -- Underwear Tights in Black \\ 
Jiminy -- Sweater in Off White \\ 
Bono NW slim denim -- Trousers in Light Blue \\ 
Wille boot -- Boots in Black \\ 
Bono NW slim denim -- Trousers in Dark Grey \\ 
Luna skinny 5 pkt -- Trousers in Light Pink \\ 
Skirt Pencil Stretch Slit -- Skirt in Black}
    \end{quote}
  \end{minipage}
\caption{Full prompt used for training and evaluation on the H\&M Fashion dataset. Unlike ESCI, this setup grounds the LLM’s instructions in the user’s \textit{purchase history} for contextualized query refinement. Only the text within \texttt{<answer>} tags is used for Owen value computations in \textsc{OSPO}, while the \texttt{<think>} section supports intermediate CoT reasoning.  The example query shown in \textcolor{purple}{purple} is drawn from the dataset.}

  \label{fig:hnm_fashion_prompt}
\end{figure}

\begin{figure}[t]
\centering
\small
\begin{tcolorbox}[width=\textwidth, colback=white, colframe=black!70, title={\sc Contextualized Query Generation Prompt}]

\vspace{0.5em}
\textbf{User:} You are a customer behavior specialist. Generate realistic customer queries that reflect how people actually search for fashion items, based on their purchase history and personal shopping context\\
\medskip
\textbf{[Customer Profile]}\\
\texttt{Customer ID: \{customer\_id\}}\\
\texttt{Transaction 1: \{date\} | Price: \$X -- \{item\_name\} -- 
\{product\_type\} in \{color\}}\\
\texttt{Transaction 2: ...}

\medskip
\textbf{[Candidate Items]}\\
\texttt{1. [ITEM\_ID]: \{item\_name\} -- \{product\_type\} in 
\{color\} \{price: \$X\}}\\
\texttt{2. ...}

\vspace{0.8em}
\textbf{Query Requirements:}
\begin{itemize}
  \item Create five diverse queries (1--3 sentences each) targeting different candidate items.
  \item Vary types: explicit brand/style references, implicit descriptions, negations, underspecified needs.
  \item Include customer context: reference past purchases by name/style, seasonal timing, wardrobe gaps, price expectations.
  \item Use natural fashion language, not technical product descriptions.
  \item Include information gaps (missing colors, wrong categories, vague descriptions).
\end{itemize}

\textbf{Item Context Requirements:}
\begin{itemize}
  \item Link customer needs to the specific product (why this item).
  \item Comparative references to owned items (e.g., ``unlike my Julia denim trousers'').
  \item Underlying motivations (work confidence, wardrobe refresh, versatility, seasonal transition).
  \item Mix natural language \emph{and} precise product terminology.
  \item State both what the customer wants \emph{and} what they are avoiding.
\end{itemize}

\textbf{Example Query:} \\
``I need a black dress that's more casual than my fitted Star knot velour dress but still looks put-together for spring. Something in that Tan dress style but in black since I'm tired of always wearing the same heavy pieces.''

\vspace{0.4em}
\textbf{Example Item Context:} \\
``Spring-transition lightweight chiffon; casual-elegance alternative to heavy velour Star knot dress; work-to-weekend versatility; wrapover styling; breathable fabric; black foundation expands options; polished-casual aesthetic reduces outfit repetition.''

\vspace{0.4em}
\textbf{Output Format:}
\begin{verbatim}
query1: <customer search query>
item_context1: <comprehensive matching description>
target_item_id1: <ITEM_ID>
[repeat for query2, query3, query4, query5]
\end{verbatim}
\end{tcolorbox}
\caption{Prompt used for H\&M contextualized query generation using the Claude Sonnet 3 expert model.}
\label{fig:hnm_prompt_example}
\end{figure}

 \subsection{Additional Experimental Details}
\label{app:baselines}

\paragraph{Task-specific Prompts}
\Cref{fig:hnm_fashion_prompt} and \Cref{fig:esci_search_prompt} shows the full prompts used for training and evaluation of the conventional product search task (ESCI) and contextualized product search task (H\&M Fashion Recommendations) reported in \Cref{tab:combined_search_results_esci_hnm}. \Cref{fig:hnm_summary_fullprompt} shows the full prompt used for user profile summary generation on H\&M Fashion data, reported in \Cref{tab:ospo_wr_compact}.

\paragraph{Baselines}
All trainable baselines reported in \Cref{tab:combined_search_results_esci_hnm}—\textbf{SFT}, \textbf{DPO}, \textbf{GRPO}, and \textbf{OSPO}—use the \textbf{Qwen2.5-Instruct 7B} model as the base architecture. For each domain, we also evaluate non-trainable encoder-based retrieval systems as task-specific baselines for contextualized search prior to our main experiments. Based on these preliminary experiments, we selected \textsc{all-mpnet-base-v2} for the ESCI shopping task and \textsc{SIMCSE-Large} for the fashion domain, as they achieved the highest retrieval NDCG on their respective validation splits when used for FAISS-based dense retrieval.

\textbf{Supervised Finetuning (SFT)} is trained using expert-generated samples filtered by task performance, retaining only those with NDCG $\geq 0.5$ for user summarization and $\geq 0.3$ for search tasks.
\textbf{Direct Preference Optimization (DPO)}~\citep{rafailov2024direct} learns from contrastive pairs sampled from the same expert generation process: “winning” responses are those with higher evaluation scores (NDCG or AP), and “losing” ones are those at least 5\% lower in AP or NDCG.
\textbf{Group-Relative Policy Optimization (GRPO)}~\citep{shao2024deepseekmath} is used as the representative value-free RL baseline, employing identical scalar reward functions as \textsc{OSPO}: retrieval NDCG for contextualized search and reward model scores for user summarization.
This unified setup ensures that all baselines differ only in how they learn from expert-generated feedback—contrastively (DPO), directly (SFT), or via sequence-level gradient updates (GRPO/OSPO).

\paragraph{Attention-based methods} Attention-based credit redistribution methods~\citep{chan2024dense} require access to reward model attention weights and tokenizer alignment between reward and policy models. Since the paper focuses on product search tasks that uses a dense retriever rather than a parametric reward model, such methods are not directly applicable. As such, we did not compare \ospo{} against attention-based baselines in our main experiments. Instead, we validate attribution quality directly in \Cref{tab:phrase_correlation} (\Cref{ssec:compute_ospo_appendix}): Owen values are $3.3\times$ more correlated with ground-truth solo-phrase retrieval quality ($\rho = 0.289$ vs.\ $0.078$), supporting coalition-based marginal attribution over attention-based proxies.

\paragraph{Format Rewards for User Summarization} While search tasks use only retrieval-based NDCG rewards, the user summarization task benefits from additional format validation. For this task, both \textsc{OSPO} and \textsc{GRPO} use a combined scalar reward with two components: a Bradley--Terry (BT) reward model (90\% weight) and a lightweight format-based reward (10\% weight). The BT reward\footnote{BT scores are normalized to $[0,1]$ using stable min--max normalization $(r - r_{\min}) / (r_{\max} - r_{\min})$, with empirical global bounds $r_{\min}=-5$ and $r_{\max}=10$ estimated over the dataset.} evaluates the quality of user summaries in the \texttt{<thinking>} section based on contrastive training with expert samples (See Appendix~\ref{app:expert_data_generation_prompts}). The format-based reward ensures syntactic correctness of ranked outputs (e.g., \texttt{<ranking>[800691007],[800691016],[752512008]</ranking>}).

This combined reward forms the sequence-level terminal reward $r^{(g)}$ in \Cref{alg:ospo}. When applying PBRS~\citep{ng1999policy} normalization at the token level (Step 16), the efficiency property ($\sum_{t=1}^{T} \phi_t = r^{(g)}$) ensures the sum of token-level attributions equals the total reward. Since PBRS simply scales attributions to match the sequence reward, using a weighted combination of rewards preserves the theoretical guarantees of Owen-value attribution while providing practical training benefits.

\paragraph{Candidate Pool Construction}
To stabilize RL training during product search tasks, we sample query-specific candidate pools that densify reward signals and are shared across GRPO and \ospo{} for fairness. During evaluation, the same filtered setting is used for consistency. For \textbf{ESCI}, we use the US Exact-Match subset, keeping queries with 3–10 relevant products. Direct NDCG over top-1000 FAISS results proved too sparse, so we build 100-item pools per query containing ground-truths, hard negatives (same category), related-category distractors, and random items. This yields denser, more informative rewards.
For \textbf{H\&M}, each query has one ground-truth target; we form analogous 100-item pools with that target plus 99 negatives sampled from similar categories.

\vspace{-3mm}
\paragraph{Hyperparameters} Our codebase is based on the TRL library. All RL training runs use \texttt{Qwen2.5-7B-Instruct} with learning rate $5\mathrm{e}{-6}$, batch size 8, and gradient accumulation over 2 steps for 2000 updates. Maximum prompt/completion lengths are 256/356 tokens for ESCI, 356/512 for H\&M, and 900/800 for user summarization. These settings are shared across \ospo{} and GRPO. Following~\citep{lin2025rec}, we set $\beta{=}0.0$ for KL regularization and use FAISS with top-$k{=}1000$ retrieval for reward computation. We additionally filter the FAISS-retrieved top-k items based on a candidate pools of relevant items, collected prior to training in each domain. Each prompt samples 8 generations for stable advantage estimates. For \ospo{}, coalition width $w_{\max}{=}8$ and 64 permutations per sequence balance local context and diversity; only \textit{contiguous} coalitions are used for semantically coherent Owen value assignments (\Cref{alg:ospo}).

\subsection{Additional Results}
\label{app:additional_results}

\paragraph{Comparison of \ospo{} with standard RL Baselines and GPT-4o on H\&M Product Search}
\Cref{tab:hnm_small_models_full} shows \ospoProp{} performance using Qwen 2.5 Instruct (1.5B and 3B models) against standard RL baselines including value-model based methods such as PPO~\citep{schulman2017proximalpolicyoptimizationalgorithms} as well as value-model free approaches like GRPO-$\lambda$ that explicitly target the uniform credit assignment problem in GRPO. Results are reported across 3 independent evaluation runs. Notably, all RL methods—initialized with the SFT model—improve over their respective 
zero-shot base models, with gains increasing monotonically from 
uniform credit assignment (GRPO) through temporal credit 
(GRPO-$\lambda$) and value-model credit (PPO) to segment-level 
marginal attribution (OSPO), confirming that credit assignment 
quality is the primary driver of post-training gains at 
deployment-relevant scales. At 1.5B, OSPO-Prop achieves 
$0.361 \pm 0.008$ NDCG, outperforming PPO ($0.333 \pm 0.005$) 
despite requiring no parametric critic, and exceeding GRPO 
($0.307 \pm 0.005$) by a statistically significant margin 
($p < 0.05$, paired $t$-test across 1{,}005 evaluation queries). At 3B, 
GRPO degrades below SFT ($0.385$ vs.\ $0.390$) under reward 
variance collapse—consistent with trends observed in prior work for similar scale models~\citep{samineni2025rlonlyanalyzingstructural}—while OSPO-Prop ($0.422 \pm 0.002$) leads 
all methods, reflecting stable and 
consistent gains across evaluation runs. As a zero-shot upper 
bound, we additionally report performance of a strong model like GPT-4o~\citep{openai2024gpt4ocard}. GPT-4o achieves $0.433 \pm 0.009$ NDCG on H\&M. Notably, 
OSPO-Prop at 3B ($0.422$) approaches this frontier, demonstrating that 
segment-level credit assignment enables small fine-tuned models 
to compete with frontier zero-shot performance on contextualized 
fashion search.

\begin{table}[t]
\centering
\small
\setlength{\tabcolsep}{4pt}
\renewcommand{\arraystretch}{0.95}

\begin{tabular}{lcccc}
\toprule
\textbf{Model} & \textbf{NDCG} & \textbf{AP} & \textbf{MRR} & \textbf{Recall} \\
\midrule
\multicolumn{5}{l}{\textit{Zero-shot Baselines}} \\
Qwen2.5-1.5B  & 0.304\tiny{$\pm$.003} & 0.287\tiny{$\pm$.002} & 0.287\tiny{$\pm$.002} & 0.353\tiny{$\pm$.003} \\
Qwen2.5-3B    & 0.384\tiny{$\pm$.001} & 0.367\tiny{$\pm$.001} & 0.367\tiny{$\pm$.001} & 0.434\tiny{$\pm$.002} \\
GPT-4o   & \textbf{0.433}\tiny{$\pm$.009} & \textbf{0.413}\tiny{$\pm$.009} & \textbf{0.413}\tiny{$\pm$.009} & \textbf{0.495}\tiny{$\pm$.011} \\
\midrule
\multicolumn{5}{l}{\textit{1.5B fine-tuned}} \\
SFT           & 0.306\tiny{$\pm$.003} & 0.289\tiny{$\pm$.003} & 0.289\tiny{$\pm$.003} & 0.356\tiny{$\pm$.004} \\
GRPO          & 0.307\tiny{$\pm$.005} & 0.289\tiny{$\pm$.005} & 0.289\tiny{$\pm$.005} & 0.359\tiny{$\pm$.007} \\
GRPO-$\lambda$    & 0.317\tiny{$\pm$.006} & 0.299\tiny{$\pm$.006} & 0.299\tiny{$\pm$.006} & 0.366\tiny{$\pm$.007} \\
PPO  & 0.333\tiny{$\pm$.005 }    & 0.316\tiny{$\pm$.006} & 0.316\tiny{$\pm$.006} & 0.386\tiny{$\pm$.005} \\
\textbf{OSPO-Prop}   & \textbf{0.361}\tiny{$\pm$.008} & \textbf{0.342}\tiny{$\pm$.009} & \textbf{0.342}\tiny{$\pm$.009} & \textbf{0.417}\tiny{$\pm$.008} \\
 
\midrule
\multicolumn{5}{l}{\textit{3B fine-tuned}} \\
SFT           & 0.390\tiny{$\pm$.003} & 0.373\tiny{$\pm$.003} & 0.373\tiny{$\pm$.003} & 0.441\tiny{$\pm$.004} \\
GRPO           & 0.385\tiny{$\pm$.008} & 0.368\tiny{$\pm$.008} & 0.368\tiny{$\pm$.008} & 0.434\tiny{$\pm$.009} \\
GRPO-$\lambda$    & 0.393\tiny{$\pm$.007} & 0.376\tiny{$\pm$.007} & 0.376\tiny{$\pm$.007} & 0.443\tiny{$\pm$.008} \\

PPO & 0.403\tiny{$\pm$.003}    & 0.385\tiny{$\pm$.003} & 0.385\tiny{$\pm$.003} & 0.456\tiny{$\pm$.002} \\
\textbf{OSPO-Prop}      & \textbf{0.422}\tiny{$\pm$.002} & \textbf{0.404}\tiny{$\pm$.002} & \textbf{0.404}\tiny{$\pm$.002} & \textbf{0.475}\tiny{$\pm$.002} \\
\bottomrule
\end{tabular}
\caption{H\&M Fashion search results across smaller (1.5 and 3B) model scales (mean $\pm$ SEM across 3 independent evaluation runs). Highest-performing zero-shot and finetuned baselines are shown in \textbf{bold}.}
\label{tab:hnm_small_models_full}
\end{table}

\paragraph{Generalization Under Retriever Shift.}
\noindent
\Cref{tab:cross_retriever_results} and \Cref{tab:combined_search_results_esci_hnm} together evaluate the \textit{cross-retriever robustness} of alignment methods—how well trained policies maintain ranking quality when tested with a different embedding and retrieval backbone. Specifically, \textsc{GRPO} and \ospo{} were trained on ESCI using \textsc{all-mpnet-base-v2} embeddings and FAISS indices, and are re-evaluated with \textsc{SIMCSE-Large}; the reverse is applied to the H\&M domain. \textit{This bi-directional setup tests how policies generalize when exposed to new embedding geometries and reward spaces unseen during training.}

\noindent
As shown in \Cref{tab:cross_retriever_results}, \textbf{\ospoProp{}} demonstrates strong generalization on ESCI product search, substantially outperforming all baselines across NDCG, MRR, and AP, confirming its ability to preserve task-specific robustness and quality with fine-grained credit assignment even under embedding shifts. In the H\&M product search task, \ospo{} additionally maintains consistent gains over \textsc{GRPO}, though the absolute values are lower due to cross-domain retriever mismatch. The “offline” baselines (\textsc{DPO}, \textsc{SFT}) perform moderately well in-domain but degrade sharply under retriever shift, underscoring their reliance on static supervision without retrieval-grounded feedback. Similarly, larger Qwen models (14B, 72B) show limited transfer, reinforcing that scale alone does not ensure robustness across retrieval domains. 

When comparing against the in-domain results from \Cref{tab:combined_search_results_esci_hnm}, we observe the expected performance drop under cross-retriever evaluation. For example, \textbf{\ospoProp{}} in H\&M—drops from $\text{NDCG}=0.436$ and $\text{Recall}=0.486$ to $0.190$ and $0.240$ respectively, while \textsc{GRPO} declines from $0.380$ to $0.184$. Despite this degradation, \ospo{} variants consistently retain higher relative performance and more stable recall than \textsc{GRPO}, illustrating its improved generalization and robustness to embedding-space and reward-model variations. In the ESCI domain product search, the same pattern holds: all methods experience performance loss when tested with the alternate retriever, but \textbf{\ospoProp{} remains the strongest overall}, surpassing both reinforcement and supervised baselines even under out-of-distribution (OOD) retriever settings.

\begin{table*}[t]
\centering
\small
\setlength{\tabcolsep}{4.5pt}
\begin{tabular}{lcccc|cccc}
\toprule
\multirow{2}{*}{\textbf{Baseline}} & \multicolumn{4}{c|}{\textbf{ESCI}} & \multicolumn{4}{c}{\textbf{H\&M}} \\
\cmidrule(lr){2-5} \cmidrule(lr){6-9}
 & \textbf{NDCG} & \textbf{AP} & \textbf{MRR} & \textbf{Recall} & \textbf{NDCG} & \textbf{AP} & \textbf{MRR} & \textbf{Recall} \\
\midrule
SFT (7B)                & 0.1761 & 0.1158 & 0.4004 & 0.1166 & -- & -- & -- & -- \\
DPO (7B)       & 0.2280 & 0.1325 & 0.4412 & 0.1330 & -- & -- & -- & -- \\
GRPO         & 0.2257 & 0.1144 & 0.3887 & 0.1150 & 0.1844 & 0.1681 & 0.1681 & 0.2348 \\
\midrule
Qwen2.5-0.5B            & 0.2183 & 0.1121 & 0.3757 & 0.1127 & 0.1453 & 0.1323 & 0.1323 & 0.1851 \\
Qwen2.5-1.5B            & 0.2926 & 0.1382 & 0.4648 & 0.1390 & 0.2158 & 0.1986 & 0.1986 & 0.2687 \\
Qwen2.5-3B              & 0.2109 & 0.1209 & 0.4128 & 0.1217 & 0.1504 & 0.1392 & 0.1392 & 0.1851 \\
Qwen2.5-14B             & 0.2628 & 0.1583 & 0.5202 & 0.1592 & -- & -- & -- & -- \\
Qwen2.5-32B             & 0.2549 & 0.1581 & 0.5135 & 0.1597 & 0.2025 & 0.1836 & 0.1836 & 0.2607 \\
Qwen2.5-72B             & \textbf{0.2986} & \textbf{0.1748} & \textbf{0.5766} & \textbf{0.1759} & 0.1621 & 0.1446 & 0.1446 & 0.2159 \\
\midrule
\textbf{OSPO (Prop)} \textit{(ours)} & \underline{0.3691} & \underline{0.1673} & \underline{0.5395} & \underline{0.1683} & \textbf{0.1901} & \textbf{0.1739} & \textbf{0.1739} & \textbf{0.2398} \\
\textbf{OSPO (Rank)} \textit{(ours)} & 0.2675 & 0.1468 & 0.4888 & 0.1475 & 0.2019 & 0.1846 & 0.1846 & 0.2557 \\
\bottomrule
\end{tabular}
\caption{\textbf{Cross-Retriever Evaluation on Product Search Tasks.}
RL baselines GRPO and \ospo{} were trained on the ESCI domain using dense retrieval rewards computed with \textsc{all-mpnet-base-v2} embeddings and a FAISS HNSW index for similarity search. For evaluation, we test generalization under a retriever shift by re-encoding sampled refined queries in both ESCI and H\&M evaluation input queries using \textsc{SIMCSE-Large}. This setting measures \textit{cross-retriever robustness}—how well alignment methods such as \textsc{OSPO} and GRPO preserve ranking quality when the embedding space and retrieval backbone differ from those used during training.}
\label{tab:cross_retriever_results}
\end{table*}

\paragraph{Generalization Beyond Retrieval: IMDB Sentiment Control.}
To evaluate whether \ospo{}'s gains transfer outside personalized 
recommendation settings, we test it on a standard RLHF sentiment 
generation benchmark. We fine-tune \texttt{lvwerra/gpt2-imdb} as the base model
using \texttt{lvwerra/distilbert-imdb} as a reward model in TRL\footnote{\url{https://huggingface.co/docs/trl/v0.11.4/en/sentiment_tuning}}, 
scoring completions by the logit of the positive sentiment class. 
All methods share identical training configuration: learning rate 
$5{\times}10^{-6}$, 3 independent seeds, and the same base model. 
We compare \ospoProp{} against GRPO~\citep{shao2024deepseekmath}, 
GRPO-$\lambda$~\citep{parthasarathi2025grpolambdacreditassignmentimproves}, 
and DAPO~\citep{yu2025dapo}—all value-model-free baselines for a direct comparison in critic-free settings. 

Fig.~\ref{fig:imdb_training_dynamics} shows training dynamics across 1{,}500 
steps. \ospoProp{} converges faster and reaches substantially higher 
train and eval rewards than all baselines, while exhibiting 
significantly lower reward variance (bottom-left) after step 500. This
indicates more stable and consistent policy learning, while GRPO collapses 
to near-zero completion length (bottom-right) by step 300, a 
degenerate reward-hacking solution where the model learns to generate 
empty or minimal sequences to avoid negative sentiment tokens—a 
direct manifestation of the uniform credit assignment failure 
documented by~\citet{samineni2025rlonlyanalyzingstructural}. 
In contrast, GRPO-$\lambda$ and DAPO partially mitigate this collapse through 
eligibility traces and dynamic sampling respectively, but neither 
achieves \ospoProp{}'s reward level or stability. These results 
confirm that Owen-Shapley marginal attribution provides a more 
effective credit signal than temporal or sequence-level alternatives 
in controlled generation settings. Notably, this faster learning trends in \ospo{} showing its sample-efficiency is consistent with our main results, where \ospo{} achieves substantially higher performance at lower training steps vs GRPO (See \Cref{fig:ospo_overview}; bottom right).

\begin{figure*}[t]
\centering
\setlength{\tabcolsep}{2pt}

\begin{tabular}{cc}
\includegraphics[width=0.48\linewidth]{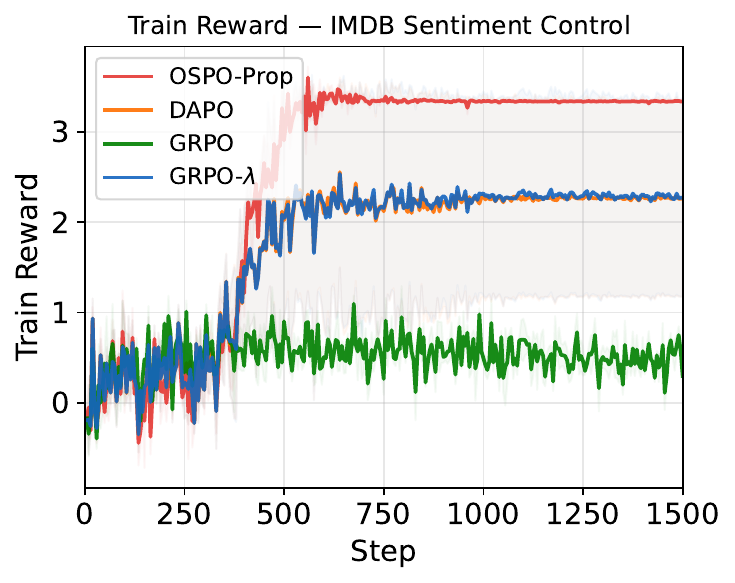} &
\includegraphics[width=0.48\linewidth]{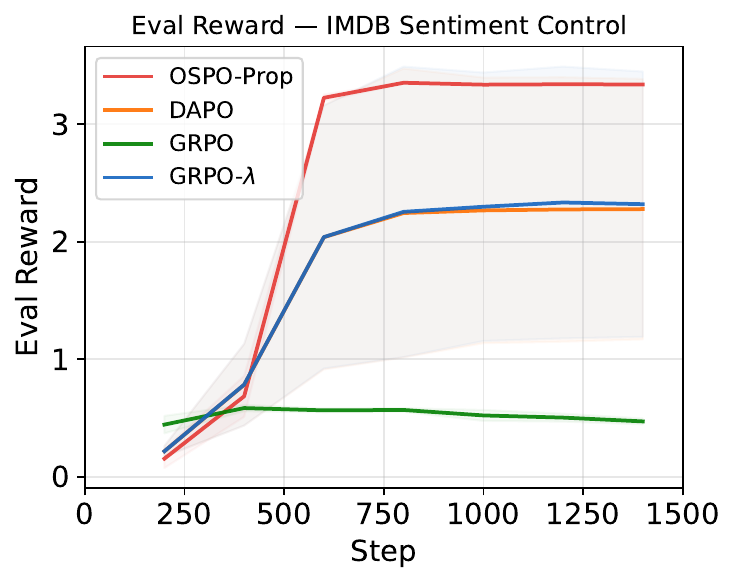} \\

\includegraphics[width=0.48\linewidth]{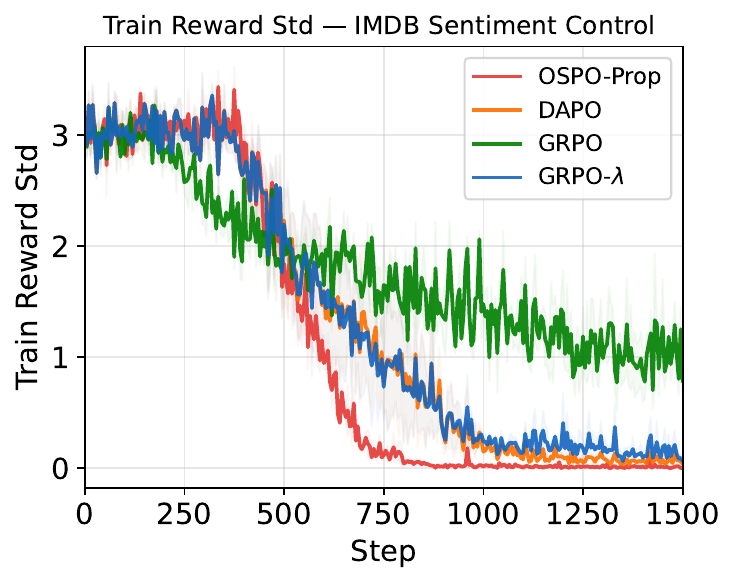} &
\includegraphics[width=0.48\linewidth]{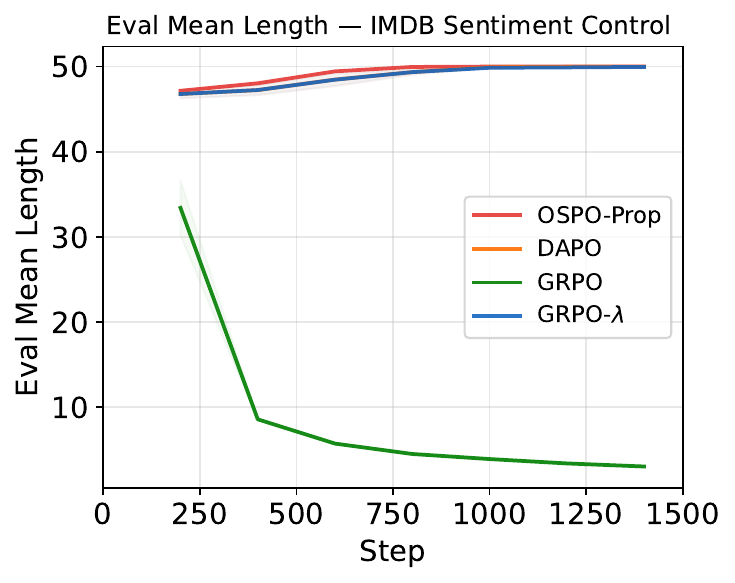} \\
\end{tabular}

\caption{
\textbf{Training dynamics of \ospo{} vs value-model free RL baselines on IMDB sentiment control}(lr=$5\times10^{-6}$) across 3 seeds. 
\textbf{Top-left:} Train reward. \textbf{Top-right:} Evaluation reward. 
\textbf{Bottom-left:} Reward standard deviation. \textbf{Bottom-right:} Mean completion length. 
\ospoProp{} converges faster and reaches substantially higher rewards than GRPO, GRPO-$\lambda$~\citep{parthasarathi2025grpolambdacreditassignmentimproves}, and DAPO~\citep{yu2025dapo}, while also exhibiting significantly lower variance, indicating more stable and consistent policy learning. These results highlight the effectiveness of marginal (Owen-Shapley) credit assignment over temporal or sequence-level alternatives in controlled generation settings.
}
\label{fig:imdb_training_dynamics}
\end{figure*}

\paragraph{Compute Budget and GPU Setup}
All \textbf{7B scale experiments} were conducted on 8$\times$H200 GPUs. Each OSPO baseline
required approximately 24 hours to complete 2{,}000 
training steps on both ESCI (9{,}339 training samples) and H\&M datasets (24{,}960 training samples). OSPO required 
approximately $1.6\times$ the wall-clock time of GRPO for the same 
number of steps, consistent with the 5.8\% per-step overhead reported 
in \Cref{ssec:compute_ospo_appendix} accumulated over the full Owen 
coalition evaluation pipeline. These results are reported in \Cref{tab:combined_search_results_esci_hnm}. Key 7B hyperparameters: LoRA ($r{=}32$, 
$\alpha{=}16$, dropout$=0.05$), 4-bit quantization, bfloat16, 
paged AdamW ($\text{lr}{=}5{\times}10^{-6}$, cosine schedule, 
100 warmup steps), effective batch size 16 
(per-device$=8$, gradient accumulation$=2$), $G{=}8$ rollouts, 
max prompt/completion lengths of 256/356 tokens, 
$w_{\max}{=}8$, $p{=}64$, KL coefficient $\beta{=}0$. 
 
For \textbf{1.5B and 3B scale experiments} on H\&M Product Search (\Cref{tab:hnm_wrap_credit_assignment}), all methods are trained on an NVIDIA Blackwell GPU (96GB VRAM)  
for 500 steps (${\approx}$1 full epoch over 8k train prompts) with effective batch size 16, using 8{,}000 H\&M training prompts out of total 24{,}960 training samples, initialized from the same SFT checkpoint (LoRA $r{=}32$, 
1 epoch, assistant-only loss, 1{,}966 training pairs), KL coefficient $\beta{=}0.1$\footnote{Unlike 7B scale training runs, we initialize RL training with KL coefficient 
$\beta{=}0.1$ instead of zero, since training is conducted with the SFT'ed checkpoint as the starting point.}, 
and NDCG@1000 with candidate pool filtering as the reward signal. GRPO and \ospo{} share learning rate $5{\times}10^{-6}$, cosine schedule with 100 
warmup steps, per-device batch size 8, gradient accumulation 2, $G{=}8$ rollouts 
per prompt, LoRA ($r{=}32$, $\alpha{=}16$, dropout$=0.05$), 4-bit quantization, 
bfloat16, and paged AdamW optimizer. \ospo{} additionally uses contiguous coalition 
evaluation with $w_{\max}{=}8$ and budget cap $p{=}96$. GRPO-$\lambda$~\citep{parthasarathi2025grpolambdacreditassignmentimproves} shares all GRPO hyperparameters 
with the addition of decay parameter $\lambda{=}0.99$ applied to group-relative advantages using the optimal $\lambda$ value as reported in their paper. For a fair evaluation, PPO~\citep{schulman2017proximalpolicyoptimizationalgorithms} shares the same LoRA 
configuration, batch setup, quantization, and optimizer as GRPO and \ospo{}, but 
with learning rate $1{\times}10^{-6}$ consistent with prior work~\citep{kazemnejad2024vineppo}, 4 PPO epochs per rollout batch, GAE with 
$\lambda{=}0.95$, KL coefficient $0.1$ with target KL $6.0$, clip range 
$\varepsilon{=}0.2$, and value function coefficient $0.1$; the value head is 
initialized from the same SFT checkpoint.

\subsection{Coalition Formation and Sampling Strategy}
\label{ssec:coalition_formation}

Our approach to computing Owen-Shapley~\citep{owen1977values, owen2014sobol} attributions requires evaluating the marginal contribution of each segment across multiple coalitions. We now detail how we construct these coalitions and manage the computational trade-offs inherent in this process.

\paragraph{Contiguous Coalition Generation}
Given a response $y^{(g)}$ sampled from the policy $\pi_\theta$ during \ospo{} training, we decompose it into $N$ segments $\mathcal{A}^{(g)} = \{a_1, \ldots, a_N\}$ via linguistic phrase extraction for product search tasks (where segments are clauses) or regex-based sentence splitting for user summary generation tasks. We then generate contiguous coalitions—subsets of consecutive segments that preserve the sequential structure of the response. Formally, a coalition $S$ is \emph{contiguous} if $S = \{i, i+1, \ldots, i+w-1\}$ for some starting position $i \in \{1, \ldots, N\}$ and width $w \in \{1, \ldots, \min(w_{\max}, N)\}$. This contiguity constraint ensures that coalition-based partial segments $y_S$ remain semantically coherent—a critical requirement for meaningful reward evaluation in our tasks. 

The complete set of contiguous coalitions $\mathcal{C}$ includes:
\begin{itemize}[leftmargin=*,itemsep=2pt]
    \item The \textbf{empty coalition} $\emptyset$, representing no query (baseline reward\footnote{In practice, for the user summary generation task—where the reward function $r(x, y_{S_m})$ depends on the context $x$—we allow for negative baseline rewards. Note that rewards for search tasks that are assigned by a retriever acting as a reward model only depend on the partial query $y_{S_m}$ and not on the context.} of 0)
    \item All \textbf{contiguous windows} of width $w = 1, \ldots, \min(w_{\max}, N)$
    \item The \textbf{full coalition} $\{1, \ldots, N\}$, representing the complete response
\end{itemize}

The total number of possible contiguous coalitions is:
\begin{equation}
\label{eq:coalition_count}
|\mathcal{C}| = 1 + \sum_{w=1}^{\min(w_{\max}, N)} (N - w + 1)
\end{equation}
where the summation counts all starting positions for each width $w$, and the leading $1$ accounts for the empty coalition. For example, with $N=5$ segments and $w_{\max}=3$:
\begin{itemize}[leftmargin=*,itemsep=2pt]
    \item Width 1: $\{1\}, \{2\}, \{3\}, \{4\}, \{5\}$ (5 coalitions)
    \item Width 2: $\{1,2\}, \{2,3\}, \{3,4\}, \{4,5\}$ (4 coalitions)
    \item Width 3: $\{1,2,3\}, \{2,3,4\}, \{3,4,5\}$ (3 coalitions)
\end{itemize}
yielding $|\mathcal{C}| = 1 + 5 + 4 + 3 = 13$ total coalitions including the empty and full coalitions.

\paragraph{Computational Budget and Sampling.}
Computing rewards for all coalitions becomes expensive and computationally intractable as $N$ grows, since $|\mathcal{C}| = O(N \cdot \min(w_{\max}, N))$. To maintain tractability, we impose a sampling budget $M$ (see \Cref{alg:ospo}) that caps the number of coalitions evaluated per response. The actual number of sampled coalitions is:
\begin{equation}
\label{eq:sampled_coalitions}
k = \min(M, |\mathcal{C}|)
\end{equation}

When $|\mathcal{C}| \leq M$, we evaluate all possible coalitions, providing exact Owen-Shapley values. When $|\mathcal{C}| > M$, we sample $M$ coalitions using the following strategy:
\begin{enumerate}[leftmargin=*,itemsep=2pt]
    \item \textbf{Anchor coalitions}: Always retain $\emptyset$ and $\{1, \ldots, N\}$, as these provide essential boundary conditions for marginal contribution estimates
    \item \textbf{Random sampling}: Uniformly sample $M-2$ coalitions from the remaining $|\mathcal{C}| - 2$ candidates
\end{enumerate}

This sampling strategy ensures that our Owen value estimates remain well-calibrated: the empty coalition anchors the baseline, while the full coalition captures the terminal reward. The intermediate coalitions provide estimates of each segment's contribution across diverse contexts.

\paragraph{Coalition Evaluation}
For each sampled coalition $S_m \in \{S_1, \ldots, S_k\}$, we construct the partial query $y_{S_m}$ by concatenating the corresponding segments in order:
\begin{equation}
y_{S_m} = a_i \oplus a_{i+1} \oplus \cdots \oplus a_{i+w-1} \quad \text{where } S_m = \{i, i+1, \ldots, i+w-1\}
\end{equation}
and evaluate its coalition value $v(S_m) = r(x, y_{S_m})$ using our reward function. In our search task, this involves issuing $y_{S_m}$ as a query to the retrieval system and computing the NDCG@1000 score against the target item(s). The empty coalition receives $v(\emptyset) = 0$ by convention, reflecting the absence of information.

\paragraph{Contiguity Rationale}
Our choice to restrict coalitions to contiguous subsets—rather than exploring all $2^N$ possible subsets—is motivated by both computational efficiency and semantic coherence. Non-contiguous coalitions (e.g., $\{1, 3, 5\}$ skipping segments 2 and 4) would produce fragmented queries that (a) require $O(2^N)$ evaluations, and (b) may yield semantically incoherent text, confounding reward evaluation and possibly lead to performance degradation due to the loss of local structure and high-variance Owen value estimates. Contiguity preserves the narrative flow of the response while reducing complexity to $O(N \cdot w_{\max})$, making Owen value estimation tractable for online RL training. In fact, our ablation study on coalition sampling in \Cref{ssec:coalition_ablations} suggests that non-contiguity in sampling can lead to a significant performance drop in conventional product search task on ESCI dataset.  

\paragraph{Illustrative Example}
Consider a response $y^{(g)}$ with extracted phrases:
\begin{align*}
a_1 &= \text{``blue midi dress''} \\
a_2 &= \text{``summer wedding''} \\
a_3 &= \text{``elegant style''}
\end{align*}
With $w_{\max}=4$ and $M=32$, we generate $|\mathcal{C}| = 1 + 3 + 2 + 1 = 7$ coalitions:
\begin{center}
\begin{tabular}{ll}
\toprule
Coalition $S_m$ & Partial Query $y_{S_m}$ \\
\midrule
$\emptyset$ & (empty, $v(\emptyset) = 0$) \\
$\{1\}$ & ``blue midi dress'' \\
$\{2\}$ & ``summer wedding'' \\
$\{3\}$ & ``elegant style'' \\
$\{1,2\}$ & ``blue midi dress summer wedding'' \\
$\{2,3\}$ & ``summer wedding elegant style'' \\
$\{1,2,3\}$ & ``blue midi dress summer wedding elegant style'' \\
\bottomrule
\end{tabular}
\end{center}
Since $|\mathcal{C}| = 7 < M = 32$, all coalitions are evaluated. Each query is submitted to the search system, producing coalition values $\{v(S_m)\}_{m=1}^7$. For instance, if the target item is a blue dress suitable for weddings, we might observe $v(\{1\}) = 0.6$, $v(\{1,2\}) = 0.85$, and $v(\{1,2,3\}) = 0.82$. The marginal contribution of segment $a_2$ in the context of $S = \{1\}$ is then $v(\{1,2\}) - v(\{1\}) = 0.25$, quantifying how ``summer wedding'' refines the query when added to ``blue midi dress.'' These marginal contributions across all applicable coalitions are aggregated via the Owen-Shapley formula to produce the final attribution $\phi_2$ for segment $a_2$.

\begin{figure}[t]
\centering
\includegraphics[width=\linewidth]{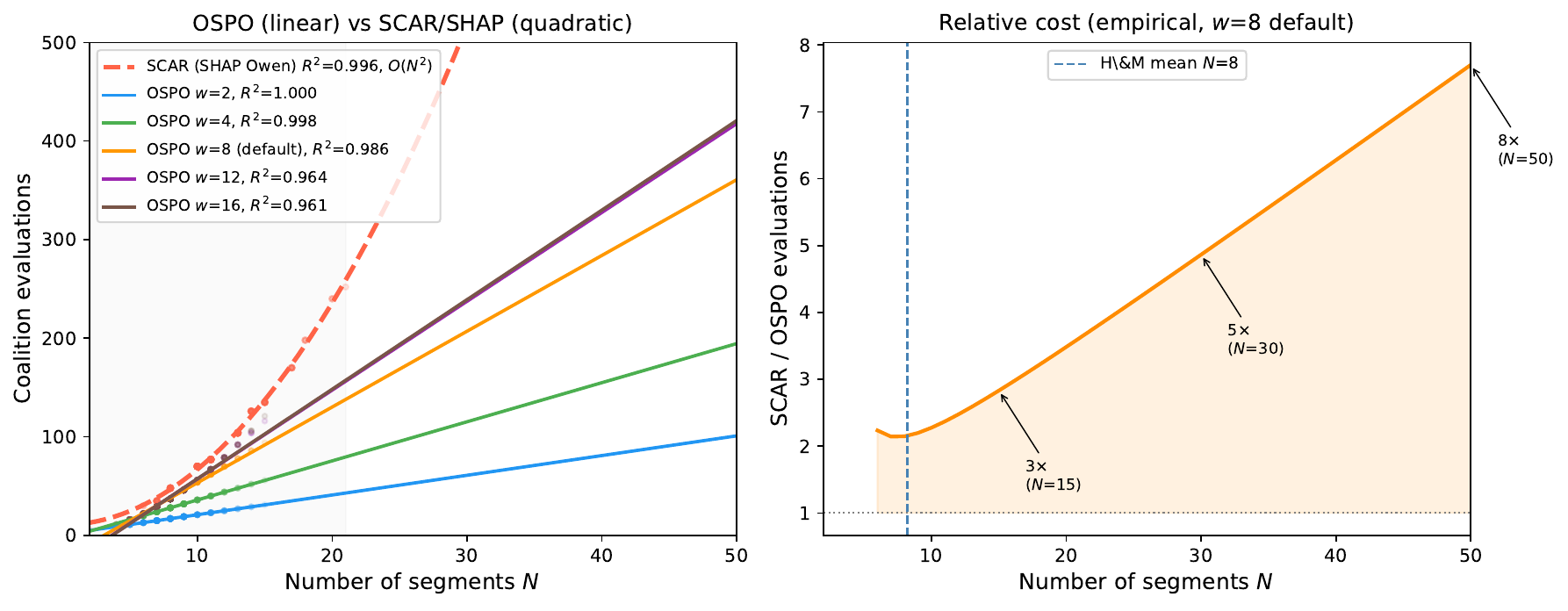}
\caption{
\textbf{Coalition complexity: \ospo{} (linear) vs.\ SCAR/SHAP}  (quadratic).
\textit{Left:} Coalition evaluation counts as a function of segment 
count $N$, measured on \textbf{Qwen2.5-7B-Instruct} rollouts on H\&M product search task. SCAR's SHAP-based partition~\citep{cao2025scar} Owen values fit a quadratic 
model ($R^2{=}0.996$);  \ospo{}'s contiguous-window restriction fits 
a linear model for all tested $w_{\max}$ ($R^2{\geq}0.961$). 
Higher $w_{\max}$ values ($\geq$8) converge on H\&M since 
$w_{\max} \geq \bar{N}$. Shaded region marks H\&M observed range.
\textit{Right:} Ratio of SCAR to  \ospo{} ($w{=}8$) coalition 
evaluations, extrapolated to $N{=}50$. The gap grows monotonically 
from $2.3\times$ at $\bar{N}{=}8$ to $8\times$ at $N{=}50$. We use the SHAP package (\url{https://shap.readthedocs.io/en/latest/generated/shap.PartitionExplainer.html}) to compute coalition generation in SCAR as noted in their work.
}
\label{fig:complexity}
\end{figure}

\paragraph{Comparison of \ospo{} vs. SCAR }
SCAR~\citep{cao2025scar} and \ospo{} share a game-theoretic foundation but target fundamentally different settings. SCAR requires a reward model capable of scoring partial completions—a condition standard reward models~\citep{lambert2024rewardbench} trained on full preference pairs do not satisfy. In contrast, \ospo{} targets retrieval-based rewards, where dense similarity models are trained on short underspecified queries~\citep{reddy2022shopping} (in our setup, ESCI query mean $\approx 3.8$ words) and naturally admit partial phrase coalitions as valid inputs. This task difference makes direct comparison inappropriate, since these two methods are addressing fundamentally different reward structures. Additionally, SCAR does not release a public RL training implementation. We therefore do not include SCAR in our baseline setup in our main experiments (\Cref{sec:experimental_setup}). Instead, we compare against PPO~\citep{schulman2017proximalpolicyoptimizationalgorithms} (\Cref{tab:hnm_wrap_credit_assignment} and \Cref{tab:hnm_small_models_full}) as the canonical critic-based baseline that uses GAE for credit assignment, since \ospo{}'s core goal is critic-free credit assignment that matches value-model quality.

We nevertheless provide a computational complexity comparison below, showing that \ospo{}'s contiguity-based coalition evaluation scales linearly whereas SCAR's partition-based Owen approximation scales quadratically.
\label{ssec:complexity_analysis} Fig.~\ref{fig:complexity} shows the results. Importantly, SCAR~\citep{cao2025scar} reduces exact Shapley's $O(2^N)$ to 
$O(N^2)$ via SHAP's partition-based Owen values~\citep{lundberg2017unified}, 
which decompose into an $O(N)$ tree traversal \textit{plus} a dominant 
$O(N^2)$ within-group subset enumeration. We empirically verify 
both scaling behaviors on coalition distributions derived from 
Qwen2.5-7B-Instruct rollouts on H\&M product search task
($\bar{N}{=}8.2$ phrases per response, $p_{95}{=}12$). Separately, we measure wall-clock cost of coalition evaluation in \ospo{} using the same rollouts, finding each FAISS lookup takes ~4ms, keeping total Owen overhead at 5.8\% of rollout time (\Cref{tab:wallclock}).

\textbf{SCAR scaling} We simulate SCAR's SHAP-based Owen 
computation by constructing a balanced binary partition tree 
over each response's phrase segments (using \texttt{scipy} 
Ward linkage, consistent with SHAP's \texttt{PartitionExplainer}), 
then counting the within-group subset evaluations. Fitting a 
quadratic model to the resulting coalition counts yields 
$R^2{=}0.996$, confirming the $O(N^2)$ claim from \citep{cao2025scar}
on our phrase distributions (mean 90.4 evaluations per response, 
vs.\ their stated $N^2{=}141.7$ upper bound at $\bar{N}{=}11.5$ 
constituency-parse players).

\textbf{\ospo{} scaling} \ospo{} restricts coalitions to contiguous 
spans of width $\leq w_{\max}$, yielding exactly 
$\sum_{w=1}^{w_{\max}}(N{-}w{+}1) = O(N w_{\max})$ evaluations. 
We fit a linear model across \textbf{five} configurations 
($w_{\max} \in \{2, 4, 8, 12, 16\}$); \textbf{all achieve 
$R^2 \geq 0.961$, confirming linear scaling empirically }
(Fig.~\ref{fig:complexity}). Notably, $w_{\max} \in \{8, 12, 16\}$ 
converge to nearly identical counts on H\&M (mean $\approx 40$), 
since the contiguous coalition space saturates when 
$w_{\max} \geq \bar{N}$. This confirms $w_{\max}{=}8$ 
is sufficient for our domain without loss of coverage as our ablations show (\Cref{ssec:coalition_ablations}).

\textbf{Relative cost} At H\&M scale ($N{\approx}8$), \ospo{} 
($w{=}8$) requires 39.6 coalition evaluations vs.\ SCAR's 
90.4 ($2.3\times$ reduction). Critically, each evaluation is 
a FAISS approximate nearest-neighbor lookup (${\approx}4$ms), 
not an LLM forward pass, keeping total Owen overhead at 
$5.8\%$ of rollout time. The computational gap widens 
substantially for longer-generation settings: $3\times$ at 
$N{=}15$, $5\times$ at $N{=}30$, and $8\times$ at $N{=}50$ 
(relevant for long-CoT or math-reasoning/summarization tasks), 
as illustrated in Fig.~\ref{fig:complexity} (right).

\subsection{Attribution Validity and Computational Analysis}
\label{ssec:compute_ospo_appendix}

A natural question for any credit assignment method is whether the 
attribution signal is both \textit{meaningful} and \textit{practical}. We address these in turn: first validating that Owen-Shapley credit 
recovers genuine retrieval causality rather than proxying simpler 
heuristics, then analyzing the computational overhead of coalition 
evaluation relative to standard GRPO~\citep{shao2024deepseekmath} training.

\paragraph{Attribution Validity: Owen-Shapley vs.\ Simpler Credit Proxies}
To verify that OSPO's gains stem from principled marginal contribution 
estimation rather than simpler attribution heuristics, we conduct a 
phrase-level correlation study using the same SimCSE retriever used 
during training (\Cref{sec:experimental_setup}). We randomly sample 200 queries from the H\&M evaluation set and generate 8 rollouts per prompt  using the trained 
\ospoProp{} model (Qwen2.5-7B-Instruct backbone) (1{,}600 generated queries total). For each 
query, linguistic phrases are extracted using spaCy with $w8-p96$ configuration and we compute: 
(1) the credit assigned to each phrase by each attribution method, 
and (2) the solo retrieval quality of each phrase---the NDCG obtained 
when that phrase alone is submitted to the FAISS index---which 
directly measures each phrase's independent retrieval contribution. 
Over 1{,}312 phrase-level observations, we compute the Spearman 
correlation between each method's assigned credit and ground-truth 
solo-phrase NDCG.

We compare five methods: \textbf{GRPO uniform} assigns identical 
weight to all tokens; \textbf{Random} provides an empirical noise 
floor; \textbf{TF-IDF} weights phrases by lexical specificity; 
\textbf{SimCSE attention} uses last-token or CLS-token attention 
weights from the retriever encoder as phrase credit proxies, 
following the approach of ABC~\citep{chan2024dense}; and 
\textbf{OSPO} computes each phrase's marginal NDCG contribution 
via contiguous coalition evaluation against the same retriever. 
Results are reported in Table~\ref{tab:phrase_correlation}.

\begin{table}[h]
\centering
\small
 
\begin{tabular}{lrrl}
\toprule
\textbf{Credit method} & $\bm{\rho}$ & \textbf{$p$-value} & 
\textbf{vs.\ Random} \\
\midrule
GRPO uniform                 & NaN            & ---       & --- \\
Random                       & 0.001          & 0.95      & baseline \\
TF-IDF                       & 0.079          & 0.004     & weak signal \\
SimCSE attn (ABC-CLS)        & 0.078          & 0.005     & weak signal \\
SimCSE attn (ABC-last token) & 0.078          & 0.004     & weak signal \\
\textbf{OSPO (Owen-Shapley)} & \textbf{0.289} & $<$0.0001 & 
\textbf{3.3$\times$ stronger} \\
\bottomrule
\end{tabular}
\caption{Spearman correlation ($\rho$) between phrase-level credit 
weights and solo-phrase retrieval quality (NDCG) across 1{,}312 
observations on H\&M. Higher $\rho$ indicates better alignment 
between assigned credit and true retrieval contribution.}
\label{tab:phrase_correlation}
\end{table}

Owen-Shapley credit is $3.3\times$ more correlated with ground-truth 
phrase retrieval quality than attention-based proxies 
($\rho{=}0.289$ vs.\ $0.078$), confirming that OSPO's gradient 
concentration reflects genuine retrieval causality rather than 
lexical frequency or encoder attention patterns. GRPO uniform 
receives NaN correlation by construction since identical weights 
provide no differential signal across phrases. TF-IDF and SimCSE 
attention weights show statistically significant but practically 
weak correlations ($\rho{\approx}0.078$), consistent with prior 
work showing attention weights are “heuristics-based” proxies for feature 
importance~\citep{cao2025scar}. These results directly motivate 
OSPO's use of coalition-based marginal attribution over 
attention-based redistribution in our main experiments 
(Section~\ref{sec:experimental_setup}).


\paragraph{Coalition Count and Budget Design}
The coalition budget parameter $p$ (e.g., $p{=}96$ in our default w8-p96 
configuration) serves as an upper bound on retrieval calls per response, not 
a fixed evaluation count. The \textit{actual} number of coalitions evaluated is 
$k = \min(p, |\mathcal{C}|)$, where $|\mathcal{C}|$ depends on the number 
of extracted phrases $N$ and the maximum coalition width $w_{\max}$. For example, in 
typical H\&M responses (N=5--8 phrases, $w_{\max}{=}8$), $|\mathcal{C}|$ 
ranges from 13--40, well below the cap of 96. Table~\ref{tab:coalition_budget} 
reports the mean actual evaluations and sampling trigger rate across all 
configurations evaluated in our ablation study (\Cref{ssec:coalition_ablations}) on coalition sampling.
Sampling is triggered only when the budget cap is set below the natural 
coalition count---the two configurations where this occurs (w3-p24, w6-p32) 
also exhibit lower task performance in Table~\ref{tab:ospo_ablation_esci}, confirming 
that noisy Owen estimates from undersampled coalitions degrade credit 
assignment quality. For all well-matched configurations, the budget cap is 
never or rarely reached, and coalitions are evaluated exhaustively.

\begin{table}[h]
\centering
\small
 
\begin{tabular}{lrrrr}
\toprule
\textbf{Config} & \textbf{Mean $|\mathcal{C}|$} & 
\textbf{Actual evals} & \textbf{\% sampling} \\
\midrule
w1-p16   & 10.2 & 10.2 & 0.3\% \\
w2-p64   & 17.3 & 17.3 & 0.0\% \\
w3-p24$^\dagger$ & 23.5 & 21.3 & 39.3\% \\
w4-p48   & 28.6 & 28.6 & 1.0\% \\
w6-p32$^\dagger$ & 35.8 & 28.9 & 60.7\% \\
w8-p96   & 39.2 & 39.2 & 0.0\%  \\
w12-p128 & 40.4 & 40.4 & 0.0\%   \\
w16-p256 & 40.5 & 40.5 & 0.0\% \\
\bottomrule
\end{tabular}

\caption{Coalition evaluation statistics across budget configurations on H\&M 
search responses. $\dagger$ denotes budget-mismatched configurations where 
$p < \overline{|\mathcal{C}|}$, triggering random sampling and producing 
noisier Owen estimates.}
\label{tab:coalition_budget}
\end{table}

\paragraph{Wall-Clock Breakdown}
Each coalition is evaluated via a single FAISS approximate nearest-neighbor 
lookup---one encode call followed by one index search---rather than a 
parametric reward model forward pass. Table~\ref{tab:wallclock} reports 
per-coalition and per-response timing measured across 3 independent runs 
of 200 H\&M samples each, using the w8-p96 configuration.

\begin{table}[h]
\centering
\small

\begin{tabular}{lr}
\toprule
\textbf{Component} & \textbf{Time} \\
\midrule
FAISS encode / coalition     & $3.99 \pm 0.009$ ms \\
FAISS search / coalition     & $0.13 \pm 0.004$ ms \\
Total FAISS / coalition      & $4.12 \pm 0.010$ ms \\
Owen overhead / response     & $160 \pm 1.8$ ms \\
Rollout time (G=8, Qwen-7B)  & $2749 \pm 19$ ms \\
\midrule
\textbf{Owen / Rollout overhead} & \textbf{5.8\%} \\
\bottomrule
\end{tabular}

\caption{Wall-clock timing breakdown for \ospo{} vs.\ GRPO on H\&M search 
(Qwen2.5-7B, G=8 rollouts, w8-p96). Owen computation occurs only during 
training; inference cost is identical to GRPO.}
\label{tab:wallclock}

\end{table}

The Owen attribution step adds $160$ms per response on average, representing 
a 5.8\% overhead relative to the rollout generation time of $2749$ms. 
\textit{Crucially, Owen-Shapley computation occurs only during training—inference 
cost is identical to GRPO since no coalition evaluation is performed at test 
time.} The practical overhead of \ospo{} relative to GRPO is therefore limited 
to training, where it is offset by faster convergence: as shown in 
Figure~\ref{fig:ospo_overview} (bottom-right), \ospo{} reaches target NDCG in approximately 
half the training steps required by GRPO.

\paragraph{Applicability to Other Reward Functions}
The low per-coalition cost reported above reflects \ospo{}'s use of dense 
retrieval (FAISS) as the characteristic function $v(S)$. Tasks using 
expensive LLM-as-judge reward functions would incur proportionally higher 
Owen computation costs. This is a deliberate design choice: OSPO targets 
outcome-based retrieval settings where coalition scoring is fast and 
semantically meaningful for partial queries.

\subsection{Ablations Study on Coalition Formation}
\label{ssec:coalition_ablations}
In order to identify the most effective coalition configuration for \ospo{} optimization, we conduct a series of experiments across multiple coalition settings to guide hyperparameter selection. To analyze the role of coalition structure more precisely, we perform additional ablations on both product search tasks by systematically varying two key parameters that govern Owen value estimation in \ospo{} as mentioned in \Cref{ssec:coalition_formation}. Specifically, \textbf{w} denotes the maximum coalition span ($w_{\text{max}}$), which determines the widest contiguous segments that can be included in a coalition, while \textbf{p} denotes the number of coalitions sampled ($M$) for estimating the expected marginal contribution of each token (via segments) under the Owen value framework (\Cref{alg:ospo}). We additionally include an \texttt{All Subsets} variant that removes the contiguity constraint (Line 5 in \Cref{alg:ospo})  by sampling from “all” possible non-contiguous coalitions up to $w_{\text{max}}$, allowing us to assess how the loss of local structure and linguistic contiguity affect marginal contribution estimation.

\begin{table}[t]
\centering
\small
\setlength{\tabcolsep}{3.2pt}
\renewcommand{\arraystretch}{0.9}
\begin{tabular}{lcccc}
\toprule
\textbf{Ablation} & \textbf{NDCG} & \textbf{Recall} & \textbf{MRR} & \textbf{AP} \\
\midrule
\midrule
\multicolumn{5}{c}{\textbf{num\_generations = 8}}\\
\cmidrule(lr){1-5}
OSPO-w8-p96         & 0.7821 & 0.8375 & 0.7625 & 0.7625 \\
OSPO-w16-p256       & 0.7637 & 0.8375 & 0.7375 & 0.7375 \\
OSPO-w2-p64         & 0.7558 & 0.8250 & 0.7312 & 0.7312 \\
OSPO-w3-p24         & 0.7545 & 0.8375 & 0.7250 & 0.7250 \\
OSPO-w6-p32         & 0.7525 & 0.8125 & 0.7312 & 0.7312 \\
OSPO-w1-p16         & 0.7183 & 0.7875 & 0.6937 & 0.6937 \\
OSPO-w4-p48         & 0.7058 & 0.7750 & 0.6813 & 0.6813 \\
OSPO-w12-p128       & 0.5946 & 0.6500 & 0.5750 & 0.5750 \\
\midrule
\multicolumn{5}{c}{\textbf{num\_generations = 2}}\\
\cmidrule(lr){1-5}
OSPO-w8-p96    & 0.2500 & 0.2500 & 0.2500 & 0.2500 \\
OSPO-w2-p64   & 0.2375 & 0.2375 & 0.2375 & 0.2375 \\
OSPO-w1-p16   & 0.2375 & 0.2375 & 0.2375 & 0.2375 \\
OSPO-w12-p128 & 0.2250 & 0.2250 & 0.2250 & 0.2250 \\
\midrule
\multicolumn{5}{c}{\textbf{All Subsets}}\\
\cmidrule(lr){1-5}
OSPO-w4-p8          & 0.1625 & 0.1625 & 0.1625 & 0.1625 \\
OSPO-w4-p16         & 0.1329 & 0.1375 & 0.1313 & 0.1313 \\
OSPO-w8-p16         & 0.1125 & 0.1125 & 0.1125 & 0.1125 \\
OSPO-w8-p8          & 0.0408 & 0.0500 & 0.0375 & 0.0375 \\
\bottomrule
\end{tabular}
\caption{Training-time ablation results on coalition structure on the ESCI product search benchmark for \textsc{OSPO-Prop} variants. Each entry reports the mean performance across 10 evaluations (every 100 steps of training) conducted on batch-level samples from 100 randomly sampled queries from the ESCI evaluation set. Here, \textbf{w} denotes the maximum coalition span ($w_{max}$ and \textbf{p} the number of coalitions sampled ($M$) for Owen value estimation in OSPO (\Cref{alg:ospo}). Note that MRR and AP converge in this batch-level monitoring setup due to the predominance of single-relevant-item queries in evaluation batches. }

\label{tab:ospo_ablation_esci}
\end{table}

\paragraph{Coalition Structure and Exploration as Drivers of Stable and Generalizable Learning} As summarized in Table~\ref{tab:ospo_ablation_esci}, OSPO’s performance in the conventional product search task on ESCI shopping queries is governed by the interaction between \emph{coalition structure} and \emph{exploration}. Moderate, contiguous configurations (e.g., \texttt{w8\_p96}, \texttt{w6\_p32}) consistently yield the strongest NDCG/MRR—balancing local compositional context with sufficient permutation diversity—whereas very narrow coalitions (e.g., \texttt{w1}, \texttt{w2}) overfit to local co-occurrences, and very wide coalitions (e.g., \texttt{w12}, \texttt{w16}) underutilize the sampling budget given typical query lengths. \texttt{All Subsets} variants underperform because they discard contiguity and disperse a fixed sampling budget over an exponentially larger non-contiguous space, producing high-variance Owen estimates and a weaker token-level credit signal. Reducing Monte Carlo generations further degrades scores across the board, underscoring that exploration via REINFORCE-based~\citep{williams92, shao2024deepseekmath} multiple completions is indispensable for stable advantage redistribution in OSPO. These trends mirror our H\&M fashion results for contextualized product search task—best generalization arises from \emph{contiguous, moderate-width coalitions with adequate permutations and generation sampling}, rather than from either extremely narrow/wide spans or non-contiguous subset sampling.

\begin{table}[t]
\centering
\small
\begin{tabular}{lcccc}
\toprule
\textbf{Method} & \textbf{Final NDCG} & \textbf{Mean NDCG} & \textbf{Std NDCG} & \textbf{Max NDCG} \\
\midrule
w4\_p48   & 0.625000 & 0.688851 & 0.114355 & 0.972842 \\
w3\_p24   & 0.609375 & 0.689658 & 0.113945 & 0.968750 \\
w16\_p256 & 0.593750 & 0.688587 & 0.111952 & 1.000000 \\
w2\_p64   & 0.587983 & 0.687731 & 0.109890 & 0.953125 \\
w12\_p128 & 0.578125 & 0.686309 & 0.111869 & 0.968750 \\
w8\_p96   & 0.562500 & 0.687286 & 0.111027 & 0.953125 \\
w6\_p32   & 0.562500 & 0.686912 & 0.112748 & 0.962983 \\
w1\_p16   & 0.546875 & 0.691219 & 0.110675 & 0.957217 \\
\bottomrule
\end{tabular}
\caption{Training performance (NDCG) on the H\&M contextualized product search dataset. Values reported at the final checkpoint \textbf{step = 1000}; we also include mean, standard deviation, and maximum NDCG observed over training. Higher is better.}
\label{tab:hnm_coalition_ablation_train}
\end{table}

\begin{figure}[H]
  \centering
\includegraphics[width=0.85\linewidth]{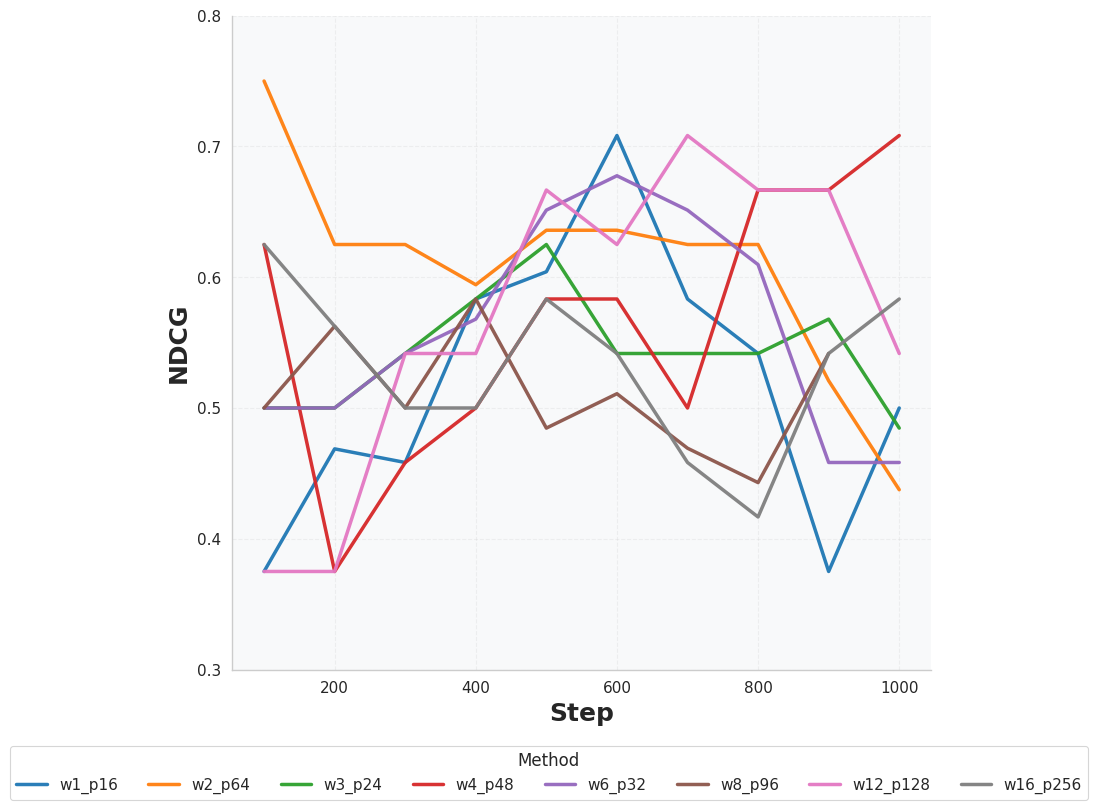}

\caption{
\textbf{Test NDCG on 100 H\&M contextualized queries across coalition strategies.}
Each curve corresponds to a fixed coalition width–permutation configuration. We vary the \textbf{maximum span width} $W \in \{1,2,3,4,6,8,12,16\}$ and the \textbf{maximum permutation count} $p \in \{16,24,32,48,64,96,128,256\}$.
Medium-width configurations (e.g., \texttt{w4\_p48}, \texttt{w6\_p32}, \texttt{w8\_p96}) yield the most reliable evaluation trajectories and highest final NDCG (up to $\approx0.71$), while very small widths oscillate or degrade and large $p$ values show diminishing returns given most contextualized queries are short with limited phrase diversity/domain-specificity.
Here, \textbf{w} denotes the maximum coalition span ($w_{max}$) and \textbf{p} the number of coalitions sampled ($M$) for Owen value estimation in OSPO (\Cref{alg:ospo}).
These results indicate that coalition design mainly influences \emph{generalization} rather than serving as a simple optimization hyperparameter.
}

  \label{fig:hnm_coalition_ablation}
  \vspace{-2mm}
\end{figure}

\vspace{-5mm}

\subsection{Coalition Width and Learning Dynamics}
\label{ssec:coalition_learning_dynamics}
\Cref{fig:hnm_coalition_ablation} presents an ablation of coalition design in OSPO on \textbf{100 held-out H\&M contextualized queries}, where we vary the \textbf{maximum coalition span width} 
\(W \in \{1,2,3,4,6,8,12,16\}\) and the \textbf{permutation budget} 
\(p \in \{16,24,32,48,64,96,128,256\}\), and track test NDCG throughout training.
Each curve corresponds to a fixed \((W,p)\) configuration and reflects its generalization behavior over time.

The results reveal a clear \textbf{non-monotonic relationship} between coalition width and generalization performance.
Very narrow coalitions (\(W \leq 2\)) exhibit rapid early gains but suffer from high variance and pronounced late-stage degradation, indicating unstable credit assignment when marginal contributions are estimated with insufficient context. At the other extreme, very wide coalitions (\(W=16\)) fail to outperform moderate settings, as typical queries contain only 6--10 segmented phrases, severely limiting the number of realizable coalitions and rendering large permutation budgets ineffective.
Increasing the permutation count \(p\) beyond the achievable coalition diversity does not improve estimation quality and instead leads to redundant resampling. In contrast, moderate-width coalitions—particularly \(W=4\) with a moderate permutation budget (\texttt{w4\_p48})—exhibit delayed but stable convergence, achieving the strongest final test NDCG with low variance and no late-stage collapse.
These findings suggest that coalition width acts as an “implicit” regularizer in OSPO, mediating a trade-off between noisy local credit assignment and underutilized compositional structure.
Below, we provide a detailed analysis of the learning dynamics and failure modes of each configuration.

\subsubsection{Small Coalitions: Fast Learning, Early Collapse}

Narrow coalition widths (w2\_p64) exhibit rapid initial learning, achieving NDCG~0.75 by step 200, but subsequently degrade substantially to 0.44 by step 1000. This pattern reflects a fundamental trade-off: smaller coalitions converge quickly by identifying surface-level patterns (e.g., ``black jeans'' or ``high waist'' as isolated features), but lack the compositional structure needed to capture how multiple preference attributes interact. 

The degradation after step 600 suggests that width-2 coalitions introduce noise into the credit assignment signal. When evaluating coalitions containing only 1--2 consecutive phrases, marginal contribution estimates become unstable—a single phrase's Owen value fluctuates depending on which adjacent phrase appears in the coalition, without sufficient context to disambiguate genuine causal effects from spurious correlations. This instability causes the model to oscillate between competing hypotheses about which tokens drive performance, ultimately converging to a suboptimal solution that fails to generalize.

\subsubsection{Very Wide Coalitions: Underutilization Due to Query Length}

Configurations with maximum width (w16\_p256) fail to outperform more moderate settings despite their theoretical expressiveness. Analysis of generated queries during training reveals the constraint: \textbf{most queries contain only 6--10 segmented phrases}, making width-16 coalitions impossible to construct. Even when the maximum permutation limit is set to 256, we observe that only approximately 41--48 unique coalitions are actually sampled per query, as the contiguous constraint combined with limited phrase count restricts the coalition space.

This underutilization has two consequences. First, the model cannot leverage the fine-grained credit assignment that wider coalitions theoretically enable due to the lack of enough phrases to form them. Second, the high permutation budget (256) is wasted on resampling similar small coalitions, rather than exploring diverse coalition structures. As a result, w16\_p256 performs no better than more efficient configurations with moderate widths, demonstrating that coalition strategy must account for the \textit{actual} query structure distribution, not just the maximum possible width.

\subsubsection{Moderate-Wide Coalitions: Delayed but Stable Convergence}

The w12\_p128 configuration demonstrates a distinct learning profile: slower initial learning compared to small coalitions, but convergence to a stable solution with sustained performance. This configuration peaks around step 700 (NDCG~0.71) and maintains reasonable performance through step 1000, suggesting that width-12 coalitions provide sufficient compositional range for the task without exceeding typical query lengths.

However, w12\_p128 exhibits late-training instability (declining from 0.71 to 0.55 between steps 700--1000), indicating potential overfitting to the coalition structure itself. With 128 sampled coalitions per query, the model may learn spurious correlations specific to the training coalition distribution rather than robust compositional patterns.

\subsubsection{Optimal Range: w4\_p48 as the Goldilocks Configuration}

The configuration with width $W=4$ and $P=48$ permutations achieves the best final performance (NDCG~0.71) through a qualitatively different learning trajectory. Unlike narrow coalitions that peak early or wide coalitions that show volatility, w4\_p48 exhibits \textbf{gradual, monotonic improvement} throughout training, with its steepest gains occurring in the final 400 steps (0.63 at step 600 $\rightarrow$ 0.71 at step 1000).

This delayed convergence reflects a more principled learning process. Width-4 coalitions can capture meaningful multi-attribute patterns (e.g., ``black skinny jeans + stiff fabric'') without requiring query lengths that rarely occur. The moderate permutation count (48) provides sufficient diversity for robust Owen value estimation without oversampling, avoiding both the noise of w2\_p64 and the overfitting risk of w12\_p128.

Critically, w4\_p48's continued improvement in late training—when other configurations plateau or degrade—suggests it does not exhibit early convergence or late-stage collapse. Instead, the gradual ascent indicates the model is progressively refining its understanding of compositional query semantics, leading to solutions that generalize to novel test cases.

\subsubsection{The Role of Coalition Noise}

The dramatic failure of w2\_p64 (collapsing from 0.75 to 0.44) and instability of w1\_p16 validate a key intuition: \textbf{coalition width acts as a regularization parameter}. Excessively narrow coalitions introduce noise by providing insufficient context for stable marginal contribution estimates. When Owen values are computed from coalitions containing only 1--2 tokens, the credit signal becomes dominated by local interactions rather than compositional semantics.

This noise has a delayed effect—early training benefits from rapid exploitation of surface patterns, but as training progresses, the accumulation of noisy credit assignments leads the model to overweight spurious features. By contrast, width-4 coalitions filter this noise by requiring evaluation contexts large enough to disambiguate genuine compositional effects from random co-occurrences, leading to more stable and generalizable learning.

\subsection{Coalition Ablations with a Smaller Backbone Model}
\label{ssec:coalition_learning_dynamics_smaller}
To complement the coalition ablation results obtained with 7B-scale models, we repeat the analysis using a significantly smaller backbone, \textbf{Qwen~2.5--1.5B--Instruct}, in order to assess the robustness of OSPO’s coalition dynamics under reduced model capacity.
This setting allows us to efficiently explore a broad range of coalition widths (\(w\)) and sampling counts (\(p\)), while also enabling finer-grained diagnostics that are more difficult to isolate at larger scales, such as reward sparsity and patterns, and potential reward hacking.
Due to computational constraints, we down-sample the contextualized H\&M search training data to 1{,}000 samples and evaluate performance after a single training epoch on 100 held-out queries.
Results are summarized in Table~\ref{tab:coalition-ablations}.
Importantly, while absolute retrieval performance is lower than in the 7B experiments, the qualitative trends in coalition behavior remain highly consistent, providing additional evidence that OSPO’s coalition design principles are stable across model scales.

\begin{table}[t]
\centering
\small
\setlength{\tabcolsep}{6pt}
\begin{tabular}{lcccc}
\toprule
\textbf{Configuration} & \textbf{Max Width} & \textbf{NDCG} & \textbf{Avg.~Tokens} & \textbf{Zero~Rate} \\
\midrule
w1\_p16    & 1  & 0.222 & 145.6  & 43.8\% \\
w2\_p32    & 2  & 0.264 & 173.7  & 50.0\% \\
w2\_p64    & 2  & 0.332 & 165.9  & 12.5\% \\
w3\_p48    & 3  & 0.237 & 202.5  & 31.2\% \\
w4\_p24    & 4  & \textbf{0.360} & 191.7  & \textbf{6.2\%} \\
w6\_p64    & 6  & \textbf{0.360} & 149.3  & 12.5\% \\
w8\_p96    & 8  & 0.300 & 199.1  & 40.6\% \\
w12\_p128  & 12 & 0.349 & 157.0  & 18.8\% \\
w12\_p256  & 12 & 0.271 & 132.1  & 43.8\% \\
\bottomrule
\end{tabular}
\caption{Owen--Shapley coalition ablation results after one epoch on H\&M contextualized product search data, evaluated on the training set using 1{,}000 samples for efficiency. 
Experiments use the Qwen~2.5--1.5B--Instruct model. 
Here, \textbf{w} denotes the maximum coalition span (consecutive phrases) and \textbf{p} the number of coalitions sampled for Owen value estimation. 
In the OSPO algorithm (\Cref{alg:ospo}), \textbf{w} corresponds to the maximum coalition width \(w_{\max}\) and \textbf{p} to the total number of sampled coalitions \(M\). 
NDCG@1000 measures ranking quality on pooled retrieval candidates, and \textbf{Zero~Rate} denotes the proportion of completions with zero reward. 
Best values are highlighted in bold.}
\label{tab:coalition-ablations}
\end{table}

\paragraph{Coalition width (\(w\))}
As shown in Table~\ref{tab:coalition-ablations}, moderate widths (\(w{=}4\text{--}6\)) yield the highest ranking quality (\textsc{NDCG}~\(\approx 0.36\)), while both narrow (\(w{\le}2\)) and wide (\(w{\ge}12\)) configurations underperform (\textsc{NDCG}~\(\le 0.33\)).
This reflects a trade-off between contextual coverage and noise accumulation.
Wider coalitions concatenate more extracted phrases, increasing the chance of mixing meaningful segments (e.g., ``comfortable jeans'', ``breathable linen'') with fragmentary artifacts (e.g., ``-~wash'', ``a~step'').
This arises from the \texttt{spaCy}\footnote{\url{https://spacy.io/models/en}} phrase extraction pipeline, which can occasionally capture short or incomplete fragments when forming noun or verb-led spans.
When such noisy fragments are concatenated under large coalition widths, they distort semantic coherence and yield incoherent queries (e.g., ``comfortable jeans a weekend outing -~wash mid wash'').
As a result, Owen--Shapley credit assignment distributes marginal contributions across irrelevant tokens, weakening policy updates.
Moderate-width coalitions, by contrast, preserve compositional context while suppressing spurious phrase interactions, producing clearer and more focused learning signals.

\textbf{Coalition sampling count (\(p\))}
The number of sampled coalitions controls the variance of the Owen value approximation. 
Moderate sampling densities (\(p{=}24\text{--}64\)) implicitly regularize training by limiting exposure to noisy phrase combinations, yielding both high NDCG and low zero-reward rates (6--12\%). 
Excessively large \(p\) values (\(p{=}256\)) degrade ranking quality (NDCG~0.27) and increase the fraction of zero-reward samples (43.8\%), 
as exhaustive coalition coverage overfits to irrelevant phrase subsets and disperses credit too broadly. 
This indicates that stochastic coalition sampling not only reduces computational cost but also stabilizes the reward signal during policy optimization.

\textbf{Assessing reward hacking}
To investigate whether \ospoProp{} exhibits reward hacking through query length manipulation, we compute the Pearson correlation coefficient between query length (in tokens) and reward across all configurations in \Cref{tab:coalition-ablations}. The observed correlation of $-0.165$ indicates a weak negative relationship, meaning longer queries tend to perform slightly \textit{worse} rather than better. 
This finding provides little evidence of reward hacking—if models were exploiting length to maximize rewards~\citep{skalse2025definingcharacterizingrewardhacking}, we would observe a strong positive correlation instead. 
The weakness of this correlation (close to zero) further suggests that query length is not a primary driver of retrieval performance; rather, semantic coherence and phrase quality matter more for effective search query generation under \ospoProp{}.

\textbf{Coalition Reward Patterns}
To understand how coalition composition affects retrieval performance, we examine individual coalition-level rewards during training.
Successful coalitions (non-zero rewards) consistently combine three key elements: concrete product identifiers (e.g., jeans'', coat'', dress''), specific attributes (e.g., mid-blue wash'', high-waist'', ankle length''), and customer intent or use-case context (e.g., weekend outing'', beach vacation'').
For example, the coalition \textit{your current wardrobe various situations cooler spring days a rich blue hue shorter length neutral colors a coat''} achieves perfect reward (1.0) by balancing contextual framing with discriminative product features. In contrast, coalitions receiving zero reward fall into predictable failure modes: generic phrases without product anchors (a pair'', modern fit''), pure adjective sequences lacking concrete nouns (sleek polished clean''), junk phrase contamination from parsing artifacts (- wash'', a step''), or excessive repetition and incomplete fragments (few lightweight... lightweight... a touch'').  Notably, even coalitions containing strong individual phrases fail when dominated by vague descriptors; for instance, \textit{prefer a vibrant color to add my usual neutral tones my beach vacation some personality''} receives zero reward despite capturing user preferences, because it lacks a specific product type or actionable attributes.
This analysis suggests that moderate coalition widths ((w{=}4\text{--}6)) succeed by maintaining sufficient phrase diversity to construct discriminative queries while avoiding the accumulation of noisy fragments that degrade semantic coherence in wider configurations.
 
Together, these findings suggest that \textbf{\ospoProp{} achieves optimal balance at moderate coalition configurations} (\(w{=}4\text{--}6,~p{=}24\text{--}64\)), 
where the phrase context is sufficiently rich to capture meaningful dependencies but constrained enough to avoid semantic drift. 
The corresponding low zero-reward rates and stable token lengths further confirm that controlled coalition sampling preserves dense and informative gradients, leading to robust policy learning and higher downstream ranking quality.

\begin{table*}[t]
\centering
\small
\begin{tabular}{p{0.95\linewidth}}
\toprule
\textbf{Prompt and Completion} \\
\midrule

\textbf{System:} You are a personal fashion advisor and stylist. Analyze the shopping history and rank \textbf{all provided candidate items} from most to least recommended for the next purchase. Consider style evolution, wardrobe needs, and purchasing patterns. Respond using \texttt{<thinking>} [analysis] \texttt{</thinking>} and \texttt{<ranking>} [ordered item IDs] \texttt{</ranking>}. \\

\addlinespace
\textbf{User:} 25 years old, inactive club member, no fashion news subscription. \newline

\textbf{Purchase history (summarized):} sweaters, leggings, bodysuits, trousers, and tops (2018–2019), primarily in pastel or neutral tones and low price range (\$0.01–\$0.02).

\textbf{Candidate items:} a mix of garments including swimsuits, a garment set, bodysuits, caps, bikini tops, trousers, blazers, dresses, and tops. Example entries include: \newline
1. [742933003]: C Jackpot Swimsuit – Swimsuit in Black (\$0.03) \newline
2. [767228001]: FASHION OFFER GRAYHAM SET – Garment Set in Off White (\$0.02) \newline
3. [630416003]: Timber L/S Body – Bodysuit in Red (\$0.01) \newline
... \newline
9. [895993004]: Dag Tank – Vest top in Light Beige (\$0.01) \newline
10. [822180002]: Tula – Sweater in Black (\$0.03) \\

\textbf{Assistant:} \newline
\texttt{<thinking>} \newline
Analyzing the customer's purchase history, it's clear that they have strong preferences for comfortable, modest-priced items and that they frequently purchase basic pieces like sweaters, undergarments, and simple tops. Each purchase has been around the \$0.01 mark, indicating a budget-conscious shopper. They have also purchased a variety of colors with an emphasis on pastels and neutral tones. Given their current inactivity and club member role, they might be looking for a more stylish yet comfortable piece that can be worn for various occasions, including social events. \newline
\newline
Here’s my analysis: The swimwear (item 1) and sporty top (item 5) are not suitable as purchases since there is no mention of beach trips or fitness activities. The blazer (item 7) is expensive and more formal, which does not align with their purchase history and budget. The high-end dress (item 8) would be a significant jump in price and might be out of their comfort zone, considering their current dresses were all within the \$0.01 price range. The expensive sweaters (items 10 and 3) are not suitable given their consistent purchase price. Items 2, 6, 9, and 4 (FASHION OFFER GRAYHAM SET, Lilja trousers, Dag Tank, and KELLY basic P-CAP) are all within the \$0.02–\$0.03 price range and offer different styles that fit well with their wardrobe. Considering the current fashion and the customer's budget and preferences, I recommend a versatile item with a neutral color and casual-chic look. \newline
\newline
\texttt{<ranking>} [767228001, 895993004, 790057001, 811907005] \texttt{</ranking>} \\
\bottomrule
\end{tabular}
\caption{Table showing an illustrative user summary generated by the \textsc{OSPO-prop}–trained Qwen2.5-7B-Instruct model in the user summary generation task (main results in~\Cref{tab:ospo_wr_compact}). System and user prompts are trimmed for brevity.
}
\label{fig:hnm_summary_fullprompt}
\end{table*}

\begin{table*}[t]
\centering
\small

\begin{tabular}{p{0.03\linewidth} p{0.93\linewidth}}
\toprule
\textbf{S\#} & \textbf{Sentence Text} \\
\midrule
S0 & Analyzing the customer's purchase history, it's clear that they have strong preferences for comfortable, modest-priced items and that they frequently purchase basic pieces like sweaters, undergarments, and simple tops. \\

S1 & Each purchase has been around the \$0.01 mark, indicating a budget-conscious shopper. \\

S2 & They have also purchased a variety of colors with an emphasis on pastels and neutral tones. \\

S3 & Given their current inactivity and club member role, they might be looking for a more stylish yet comfortable piece that can be worn for various occasions, including social events. \\

S4 & Here’s my analysis: \\

S5 & The swimwear (item 1) and sporty top (item 5) are not suitable as purchases since there is no mention of beach trips or fitness activities. \\

S6 & The blazer (item 7) is expensive and more formal, which does not align with their purchase history and budget. \\

S7 & The high-end dress (item 8) would be a significant jump in price and might be out of their comfort zone, considering their current dresses were all within the \$0.01 price range. \\

S8 & The expensive sweaters (items 10 and 3) are not suitable given their consistent purchase price. \\

S9 & Items 2, 6, 9, and 4 (FASHION OFFER GRAYHAM SET, Lilja trousers, Dag Tank, and KELLY basic P-CAP) are all within the \$0.02–\$0.03 price range, and offer different styles that can fit well with their wardrobe. \\

S10 & Considering the current fashion and the customer's budget and preferences, I would recommend a versatile item that can fit most scenarios, thus likely a piece with a more neutral color and a casual yet chic look. \\

S11 & \texttt{<ranking> [767228001, 895993004, 790057001, 811907005] </ranking>} \\
\bottomrule
\end{tabular}
\caption{OSPO User summary generation sample: Sentence-level reasoning breakdown from assistant completion. Each sentence corresponds to $S_i$ in the coalition analysis.}
\label{tab:ospo_user_summary_sentence_breakdown}
\end{table*}

\begin{table*}[t]
\centering
\small
\begin{tabular}{r l r p{0.55\linewidth}}
\toprule
\textbf{ID} & \textbf{Coalitions} & \textbf{Reward} & \textbf{Interpretation} \\
\midrule
0  & [] & $-3.2290$ & Empty user summary — baseline negative reward; \\

1  & [0] & $3.6821$ & First sentence alone carries strong global context (“preferences for comfortable, modest-priced items”). Establishes key signal. \\

2  & [0,1] & $4.4063$ & Early purchase-pattern reasoning (budget inference) compounds positively — synergy between S0 and S1. \\

3  & [0,1,2] & $4.7512$ & Adds color palette reasoning (S2), maintaining strong coherence; reward increases marginally. \\

4  & [0,1,2,3] & $3.4214$ & Adding lifestyle speculation (S3) introduces slight noise; reasoning starts to deviate. \\

5  & [0,1,2,3,4] & $4.7757$ & Meta-structure (“Here’s my analysis:”) helps logical segmentation — reward briefly peaks. \\

6  & [0–5] & $-2.9897$ & Insertion of item-level exclusions (S5) sharply reduces relevance; excessive pruning or bias. \\

7  & [0–6] & $0.3499$ & Reintroduces structured filtering, partial recovery but still weak. \\

8  & [0–7] & $-1.0558$ & Over-constraint: adds formalwear exclusion (S7), possibly too harsh for general ranking. \\

9  & [0–8] & $-0.8890$ & Continues negative trend — overly restrictive logic misaligns with ground-truth ranking. \\

10 & [0–9] & $3.9120$ & Large recovery — includes specific candidate discussion (S9), capturing useful selection rationale. 
Highest observed reward before verbosity penalty. \\

11 & [0–10] & $3.0907$ & Adding generic stylistic recommendation (S10) slightly dilutes retrieval precision. \\

12 & [0–11] & $2.6979$ & End-of-trace expansion (ranking emission) inflates token count; diminishing returns. \\

13 & [1–11] & $1.8037$ & Excluding S0 removes global framing → coherence loss, partial reward drop. \\

14 & [2–11] & $2.1026$ & Partial recovery via descriptive attributes (S2), but weak without initial context. \\
\bottomrule
\end{tabular}
\caption{Coalition formation and rewards for sampled coalitions in the user profile summary generation task, where sentences act as ``players'' (segments). Each coalition's reward is computed from sentence segments extracted from Chain-of-Thought traces of the Qwen2.5-7B-Instruct base model. The reward estimator is the same base model with a trained linear head on expert-generated contrastive user profile samples. Segment details are provided in \Cref{tab:ospo_user_summary_sentence_breakdown}.}
\end{table*}

\begin{table*}[t]
\centering
\small
\begin{tabular}{r l r p{0.55\linewidth}}
\toprule
\textbf{ID} & \textbf{Sentence Subset} & \textbf{Reward} & \textbf{Interpretation} \\
\midrule

15 & [3–11] & $1.8825$ & Retains lifestyle speculation but loses profile grounding; lower reward. \\
16 & [4–11] & $1.9275$ & Pure analysis section without framing; moderately neutral. \\

17 & [5–11] & $2.5463$ & Item-filtering logic dominates — fair local precision, low diversity. \\

18 & [6–11] & $2.5095$ & Similar to (17); mild degradation, redundant exclusions. \\

19 & [7–11] & $3.1307$ & Focus on mid/late reasoning; surprising bump — possibly due to item-specific reasoning (S9). \\

20 & [8–11] & $2.1938$ & Partial sentence window (S8–S11) lacks structure; reward decays. \\

21 & [9–11] & $2.9035$ & Captures shortlist discussion (S9–S11) — relevant but lacks justification context. \\

22 & [10,11] & $1.3301$ & Contains generic stylistic advice and ranking output; relevant but low semantic grounding. \\

23 & [11] & $0.5208$ & Ranking-only output; minimal information — weak semantic signal. \\

24 & [1,2] & $3.8981$ & Crucial early coalition: purchase habits + color reasoning; high consistency and relevance. \\

25 & [2,3] & $3.0590$ & Local coalition of color + lifestyle; moderate contextual fit. \\

26 & [3,4] & $3.8642$ & Transition from lifestyle to analysis marker — meaningful logical flow, strong local coherence. \\

27 & [4,5] & $0.0054$ & Weak handoff; introduction of filtering abruptly reduces clarity. \\

28 & [5,6] & $2.4154$ & Filtering criteria + pricing logic; regains some precision. \\

29 & [6,7] & — & (Not logged) Likely small negative or neutral value given surrounding trend. \\

30 & [7,8] & $-0.4614$ & Consecutive exclusions (formal/dress) yield overfitting — retrieval misalignment. \\

31 & [8,9] & $0.0060$ & Weak positive recovery; transition toward item justification. \\

32 & [9,10] & $-0.7675$ & Adding stylistic advice to item shortlist confuses ranking criteria; drop in reward. \\
\bottomrule
\end{tabular}
\caption{OSPO Rewards (Continued): Coalition combinations, rewards, and interpretive comments. 
Rewards correspond to partial-sentence subsets of the expert reasoning trace.}
\label{tab:user_summary_examples_last_table}
\end{table*}

\begin{table*}[t]
\centering
\small
\setlength{\tabcolsep}{5pt}
\renewcommand{\arraystretch}{1.1}
\begin{tabular}{p{0.16\linewidth} p{0.40\linewidth} p{0.40\linewidth}}
\toprule
\textbf{Field} & \textbf{GRPO} & \textbf{\ospoProp{}} \\
\midrule
\multicolumn{3}{l}{\textit{Example 1 — Latent intent shift}} \\
\midrule
\textbf{User Query} &
\multicolumn{2}{p{0.80\linewidth}}{
I have this amazing oversized black sweater that's my cozy weekend uniform with leggings. But I'd love to mix it up with something a little lighter and brighter for spring. A slouchy, relaxed yellow top could be the perfect swap — still laid-back but refreshing.
} \\
\textbf{Purchase History} &
\multicolumn{2}{p{0.80\linewidth}}{
Sacramento dress (Black), ROSIE HW Biker leggings (Black), Neve Off Shoulder sweater (Black), Fidde tee (White), Mermaid Swimsuit (Black), Lana seamless bralette (Black), Porsche dress (Black).
} \\
\textbf{Target Item} &
\multicolumn{2}{p{0.80\linewidth}}{
\texttt{892100001} — Lightweight, relaxed sweater with breathable construction and brighter styling for transitional wear.
} \\
\midrule
\textbf{Expanded Query} &
Discover our vibrant collection of slouchy tops! Crafted from high-quality fabrics like cotton-blend and rayon, these pieces provide comfort without compromising on style. Ideal for warmer months, each piece adds a touch of freshness to your wardrobe. &
This slouchy, relaxed yellow top would perfectly transition from the cozy comfort of their oversized black sweater to the warmer weather of spring. It maintains a laid-back vibe while adding a touch of brightness and freshness. The yellow color will complement the black sweater nicely. \\
\textbf{Target in Top-1000} & No & \textbf{Yes} \\
\textbf{NDCG} & 0.0000 & \textbf{1.0000} \\
\midrule
\multicolumn{3}{l}{\textit{Example 2 — Purchase history hallucination}} \\
\midrule
\textbf{User Query} &
\multicolumn{2}{p{0.80\linewidth}}{
I've been living in my black trousers lately, but they can feel a bit heavy and constricting when the weather warms up. I'm looking for something breezier and more casual that still has that sleek, tailored vibe. Maybe a lightweight short with a relaxed fit?
} \\
\textbf{Purchase History} &
\multicolumn{2}{p{0.80\linewidth}}{
Reese sneakers (Light Beige), Maria dress (Green), Chaka linen T-shirt (Light Beige), Sun dress (Black), Lyra HW wide full trousers (Black), Queen Sweater (White).
} \\
\textbf{Target Item} &
\multicolumn{2}{p{0.80\linewidth}}{
\texttt{817472001} — Softly draping viscose shorts, breathable comfort for warm weather, high-waisted tailored silhouette.
} \\
\midrule
\textbf{Expanded Query} &
This summer, consider the Lyra Wide Full Trousers in Light Beige. With a relaxed yet tailored design, these pants offer a breathable, casual look perfect for warm days. &
Discover our new collection of breezy shorts! Crafted with a relaxed fit and a sleek, tailored look, these shorts offer a perfect balance between comfort and style, maintaining the same sophisticated silhouette as existing black trousers. \\
\textbf{Target in Top-1000} & No & \textbf{Yes} \\
\textbf{NDCG} & 0.0000 & \textbf{0.8333} \\
\textbf{Top-5 Retrieved} &
\texttt{546565010} &
\texttt{817472001} (target), \texttt{879307002}, \texttt{739144008}, \texttt{772711004}, \texttt{781813005} \\
\bottomrule
\end{tabular}
\caption{Qualitative comparison of GRPO and \ospoProp{} on contextualized H\&M product search.}
\label{tab:grpo_ospo_case_study}
\end{table*}

\begin{table*}[t]
\centering
\small
\setlength{\tabcolsep}{5pt}
\renewcommand{\arraystretch}{1.1}
\begin{tabular}{p{0.16\linewidth} p{0.40\linewidth} p{0.40\linewidth}}
\toprule
\textbf{Field} & \textbf{GRPO} & \textbf{\ospoProp{}} \\
\midrule
\multicolumn{3}{l}{\textit{Example 3 — Color preference inference}} \\
\midrule
\textbf{User Query} &
\multicolumn{2}{p{0.80\linewidth}}{
My usual black and white tees are starting to feel a little boring to me. I'd love to add a pop of color to brighten up some of my outfits, but I still want that same soft, casual vibe. Maybe a pretty pastel or soft pink top that I could wear with my black trousers or under my denim shirt?
} \\
\textbf{Purchase History} &
\multicolumn{2}{p{0.80\linewidth}}{
Madison Slim Stretch Chino (Light Beige), Oregano dress (Black), Bon HW straight trousers (White), Jentry Denim Shirt (Black), Cat Tee (White), Blenda tee (Black).
} \\
\textbf{Target Item} &
\multicolumn{2}{p{0.80\linewidth}}{
\texttt{880312005} — The Delila ss cropped tee offers a youthful warm-weather option in a pastel shade, complementing casual separates.
} \\
\midrule
\textbf{Expanded Query} &
Add a bold pastel or soft pink t-shirt featuring a coordinated design with existing wardrobe staples: light beige chinos, black oregano dress, white straight trousers, black sandals, off-white wedding dress, and classic black tee. &
This pastel or soft pink t-shirt would complement Madison's slim stretch chinos and her black dress. It maintains the casual vibe while adding a pop of color. The soft texture aligns well with her preference for comfort and casual attire. \\
\textbf{Target in Top-1000} & No & \textbf{Yes} \\
\textbf{NDCG} & 0.0000 & \textbf{0.6667} \\
\textbf{Top-5 Retrieved} &
— &
\texttt{880312005} (target): Delila ss cropped tee, pastel shade \\
\midrule
\multicolumn{3}{l}{\textit{Example 4 — Style attribute transfer}} \\
\midrule
\textbf{User Query} &
\multicolumn{2}{p{0.80\linewidth}}{
I'm really digging the cropped sweater trend for fall, but I want to find one that's not too short or tight. My white Jennifer blazer has such a nice boxy, relaxed fit, so I'm wondering if there are any sweaters out there with a similar vibe? Neutral colors would be ideal to mix and match with my existing pieces.
} \\
\textbf{Purchase History} &
\multicolumn{2}{p{0.80\linewidth}}{
Ravello shackett (Off White), Mutt car coat (Beige), Jennifer Blazer (White), Tori dress (Black), Wowcha top (Black).
} \\
\textbf{Target Item} &
\multicolumn{2}{p{0.80\linewidth}}{
\texttt{889753004} — The Fazer balloon sweater has an of-the-moment cropped, boxy shape in a fresh neutral shade.
} \\
\midrule
\textbf{Expanded Query} &
Discover a cozy, boxy, neutral sweater alternative crafted in soft fabric with a relaxed silhouette. Its flattering boxy shape draws inspiration from the classic Jennifer blazer. Complements existing blazers and chic swimwear. &
Discover a versatile cropped sweater with a boxy, relaxed fit reminiscent of the Jennifer blazer. Perfectly neutral in color, coordinates with existing white Ravello shackett, black Tori dress, and dark green Wowcha top. \\
\textbf{Target in Top-1000} & No & \textbf{Yes} \\
\textbf{NDCG} & 0.0000 & \textbf{0.6667} \\
\textbf{Top-5 Retrieved} &
— &
\texttt{889753004} (target): Fazer balloon sweater, boxy cropped shape \\
\bottomrule
\end{tabular}
\caption{Example 3 highlights OSPO's ability to infer latent color preference shifts (neutral $\rightarrow$ pastel) while grounding in wardrobe context, enabling successful retrieval. Example 4 demonstrates style attribute transfer from a reference item (Jennifer blazer) to recover the correct product category and fit, where GRPO fails due to generic or weakly grounded expansions.}
\label{tab:grpo_ospo_case_study_2}
\end{table*}

\begin{table*}[t]
\centering
\small
\setlength{\tabcolsep}{5pt}
\renewcommand{\arraystretch}{1.1}
\begin{tabular}{p{0.20\linewidth} p{0.40\linewidth} p{0.40\linewidth}}
\toprule
\textbf{Field} & \textbf{GRPO} & \textbf{\ospoProp{}} \\
\midrule
\multicolumn{3}{l}{\textit{Example 1 — Attribute enrichment}} \\
\midrule
\textbf{User Query} &
\multicolumn{2}{p{0.80\linewidth}}{all dog man books set} \\
\textbf{Expanded Query} &
``All Dog Man books set by Dav Pilkey including all volumes in the series.'' &
All sets of Dog Man books that include the complete series. Specifically, a collection containing all Dog Man books by Dav Pilkey, ideally in paperback format. A boxed set or hardcover edition would also be great, with durable covers suitable for kids. \\
\textbf{Top-5 Retrieved} &
— &
\texttt{9123877294}, \texttt{B08462HBKH}, \texttt{9123786302}, \texttt{1338323210}, \texttt{1338535625} \\
\textbf{NDCG} & 0.000 & \textbf{1.000} \\
\midrule
\multicolumn{3}{l}{\textit{Example 2 — Brand retention}} \\
\midrule
\textbf{User Query} &
\multicolumn{2}{p{0.80\linewidth}}{actloe women cowl neck sweater} \\
\textbf{Expanded Query} &
Women's cowl neck sweater made of soft material suitable for fall, available in sizes S, M, and L. &
Actloe women's cowl neck sweater in soft wool blend material with a cozy texture, perfect for layering during cooler seasons. Features a fashionable cowl neckline in various flattering colors. \\
\textbf{Top-5 Retrieved} &
— &
\texttt{B07TYXRMZH} (target) \\
\textbf{NDCG} & 0.000 & \textbf{1.000} \\
\bottomrule
\end{tabular}
\caption{ESCI query expansion: cases where \ospoProp{} outperforms GRPO. 
Example 1 shows OSPO adding format-specific attributes (paperback, boxed set, 
durable covers) that GRPO omits, retrieving all 9 targets vs.\ none. 
Example 2 shows OSPO retaining the brand name \textit{Actloe}—dropped by 
GRPO—as the key retrieval signal.}
\label{tab:esci_ospo_wins}
\end{table*}

\begin{table*}[t]
\centering
\small
\setlength{\tabcolsep}{5pt}
\renewcommand{\arraystretch}{1.1}
\begin{tabular}{p{0.20\linewidth} p{0.40\linewidth} p{0.40\linewidth}}
\toprule
\textbf{Field} & \textbf{GRPO} & \textbf{\ospoProp{}} \\
\midrule
\multicolumn{3}{l}{\textit{Example 3 — Category hallucination (GRPO wins)}} \\
\midrule
\textbf{User Query} &
\multicolumn{2}{p{0.80\linewidth}}{brother se600 hoops} \\
\textbf{Expanded Query} &
Brother SE600 embroidery machine with hoops for high-quality embroidery projects. &
Brother SE600 Hoops Inkjet Printer — high-quality printer for home and small office use, 
featuring print speeds up to 25 PPM and support for a wide range of media sizes. \\
\textbf{Top-5 Retrieved} &
\texttt{B07YVJ14K7}, \texttt{B07DD1C6N2}, \texttt{B07GPMGPBM}, \texttt{B07H57ZX8L}, 
\texttt{B07P9P1HW6} &
— \\
\textbf{NDCG} & \textbf{1.000} & 0.000 \\
\midrule
\multicolumn{3}{l}{\textit{Example 4 — Both succeed with different strategies (tie)}} \\
\midrule
\textbf{User Query} &
\multicolumn{2}{p{0.80\linewidth}}{german marzipan chocolate} \\
\textbf{Expanded Query} &
German-made marzipan chocolate bar or candy, specifically looking for a high-quality 
German marzipan chocolate product. &
German marzipan chocolate confections with chocolate coating, made from almond paste, 
sugar, and egg whites, originating from regions like Nuremberg or Dresden. Various 
shapes with smooth chewy texture and rich chocolate exterior. \\
\textbf{Top-5 Retrieved} &
\texttt{B07JCPBXJG}, \texttt{B001FA1CCO}, \texttt{B086PDVJSL}, \texttt{B004H0ZCI4}, 
\texttt{B00A7J3MBS} &
\texttt{B07JCPBXJG}, \texttt{B086PDVJSL}, \texttt{B001FA1CCO}, \texttt{B004H0ZCI4}, 
\texttt{B00A7J3MBS} \\
\textbf{NDCG} & \textbf{1.000} & \textbf{1.000} \\
\bottomrule
\end{tabular}
\caption{ESCI product search: GRPO win and tie cases. Example 3 shows OSPO's 
failure mode—a verbose generation causes a category hallucination (embroidery machine 
$\to$ inkjet printer), sending retrieval to the wrong embedding region while GRPO's 
concise expansion correctly identifies the product. Example 4 shows both methods 
succeeding via different strategies: GRPO with a concise factual phrase, OSPO with 
regional provenance and ingredient detail—consistent with the finding that query 
length does not predict retrieval performance ($r{=}0.026$, $p{=}0.401$).}
\label{tab:esci_grpo_win_tie}
\end{table*}

\clearpage
 
\end{document}